\newtheorem{theorem}{Theorem}[section]
\newcommand{\methodName}{\textsc{ConfLVLM}}
\newcommand{\modelname}[1]{\texttt{#1}}
\def\prob{\mathbb{P}}
\def\eqref#1{equation~\ref{#1}}
\def\ceil#1{\lceil #1 \rceil}
\def\1{\bm{1}}
\def\rv{{\textnormal{v}}}
\def\rx{{\textnormal{x}}}
\def\ry{{\textnormal{y}}}
\def\vtheta{{\bm{\theta}}}
\def\mC{{\bm{C}}}
\def\mE{{\bm{E}}}
\def\mI{{\bm{I}}}
\def\mV{{\bm{V}}}
\def\mX{{\bm{X}}}
\def\mY{{\bm{Y}}}
\DeclareMathAlphabet{\mathsfit}{\encodingdefault}{\sfdefault}{m}{sl}
\SetMathAlphabet{\mathsfit}{bold}{\encodingdefault}{\sfdefault}{bx}{n}
\def\gI{{\mathcal{I}}}
\def\gL{{\mathcal{L}}}
\def\gX{{\mathcal{X}}}
\def\gY{{\mathcal{Y}}}
\def\sR{{\mathbb{R}}}
\title{Towards Statistical Factuality Guarantee for Large Vision-Language Models}
\author{
 Zhuohang Li$^1$,\; Chao Yan$^2$,\; Nicholas J. Jackson$^1$,\; Wendi Cui$^3$,\; Bo Li$^4$,\; Jiaxin Zhang$^{3,5}$,\; Bradley A. Malin$^{1,2}$ \\
  $^1$Vanderbilt University,\; $^2$Vanderbilt University Medical Center, \\
  $^3$Intuit,\; $^4$University of Illinois Urbana-Champaign,\; $^5$Intuit AI Research\\
}
\begin{document}
\maketitle

\begin{abstract}
Advancements in Large Vision-Language Models (LVLMs) have demonstrated promising performance in a variety of vision-language tasks involving image-conditioned free-form text generation. However, growing concerns about hallucinations in LVLMs, where the generated text is inconsistent with the visual context, are becoming a major impediment to deploying these models in applications that demand guaranteed reliability. In this paper, we introduce a framework to address this challenge, \methodName, which is grounded on conformal prediction to achieve finite-sample distribution-free statistical guarantees on the factuality of LVLM output. This framework treats an LVLM as a hypothesis generator, where each generated text detail (or claim) is considered an individual hypothesis. It then applies a statistical hypothesis testing procedure to verify each claim using efficient heuristic uncertainty measures to filter out unreliable claims before returning any responses to users. We conduct extensive experiments covering three representative application domains, including general scene understanding, medical radiology report generation, and document understanding. Remarkably, \methodName~reduces the error rate of claims generated by \modelname{LLaVa-1.5} for scene descriptions from 87.8\% to 10.0\% by filtering out erroneous claims with a 95.3\% true positive rate. Our results further demonstrate that \methodName~is highly flexible, and can be applied to any black-box LVLMs paired with any uncertainty measure for any image-conditioned free-form text generation task while providing a rigorous guarantee on controlling the risk of hallucination.
\end{abstract}

\section{Introduction}
\label{sec:intro}
Large Vision-Language Models (LVLMs) which  combine Large Language Models (LLMs) with computer vision modules, have demonstrated 
remarkable multi-modal abilities~\cite{liu2024visual, abdin2024phi,metaLlama32,openaigpt4omini}. LVLMs are designed to receive both free text and visual content as inputs (e.g., images or videos) and generate text responses to user queries about the visual input, thus enabling a flexible conversational interface for numerous visual perception and multi-modal comprehension tasks. This capability has triggered successful developments across 
various application domains, including cross-modality agents specializing in graphical user interface understanding and planning~\cite{hong2024cogagent}, visual-language foundation models for analyzing pathology slides~\cite{lu2024visual}, and autonomous driving assistants offering real-time reasoning and decision-making~\cite{wen2023road}.
Despite the excitement and new opportunities LVLMs offer, 
they can
produce \textit{hallucinations}~\cite{bai2024hallucination}—text outputs that contain erroneous, or simply fabricated, assertions that deviate from the visual content.  This can significantly limit their adoption in
safety-critical fields, such as healthcare and autonomous driving.

\begin{figure*}
    \centering
    \includegraphics[width=\linewidth]{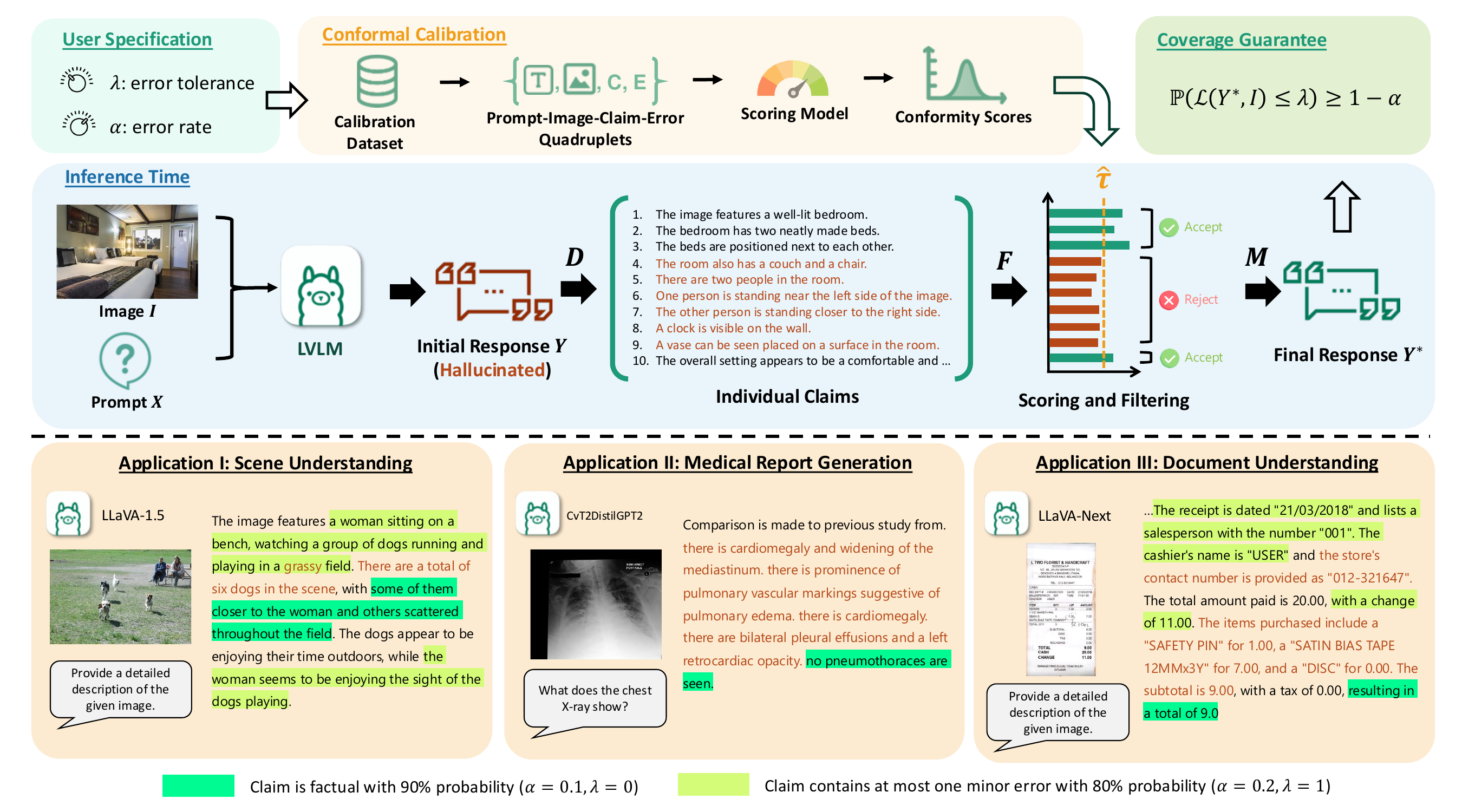}
    \caption{\textbf{Overview}: given user-specified error tolerance $\lambda$, error rate $\alpha$, and a calibration dataset, \methodName~returns a more reliable response for any new image and prompt at inference time through \textit{sampling}, \textit{decomposing} $D$, \textit{filtering} $F$, and \textit{merging} $M$, to ensure that the risk of the final response $Y^*$ is controlled with high probability. Illustrative examples, one for each application domain, are provided for outcome demonstration, where claims are highlighted to indicate \methodName's confidence using specific conformity score and error tolerance level. Unhighlighted claims correspond to low confidence in factuality check.}
    \label{fig:overview}
\end{figure*}

\paragraph{Related Work.}
To date,
the research on LVLM hallucinations has generally
focused on two
threads.
One line of work aims to build benchmarks~\cite{li2023evaluating,yin2023survey,lovenia2023negative,sun2023aligning,kaul2024throne,guan2024hallusionbench,jing2023faithscore} and metrics~\cite{rohrbach2018object,li2023evaluating,jing2023faithscore} for assessing and analyzing hallucinations in popular LVLMs.  Most studies in this area
concentrate on object hallucination in discriminative question-answering tasks~\cite{li2023evaluating,yin2023survey,lovenia2023negative},
with a lesser focus
on other types of hallucination in free-text generation tasks~\cite{jing2023faithscore,kaul2024throne,guan2024hallusionbench}.
The other line of work aims to develop strategies to mitigate LVLM hallucinations.
Notable
strategies in this line of investigation include refining the training and instruction tuning phases by optimizing the loss function~\cite{jiang2024hallucination} or alignment objective~\cite{zhao2023beyond,gunjal2024detecting,sun2023aligning}, as well as enhancing the inference phase by designing new decoding algorithms~\cite{leng2024mitigating,favero2024multi,deng2024seeing}.
However, these solutions are often resource-heavy and lack flexibility,
typically requiring model retraining or white-box access during decoding. By contrast, \emph{post hoc} correction methods offer a more flexible approach,  which improves responses from any black-box LVLMs through the assistance of external language or vision modules~\cite{yin2023woodpecker}, dedicated revisor model~\cite{zhou2023analyzing}, or self-revision techniques~\cite{lee2023volcano}. However, the current collection of methods relies solely on heuristics and lacks rigorous statistical guarantees of factuality for the revised output, a necessity in mission-critical application domains.

\paragraph{Our Contribution.}
To address these challenges, in this study, we introduce \methodName, a framework with statistical guarantees on output factuality (alignment of text response with visual context) that seamlessly integrates with LVLMs of any architecture, complexity, and purpose.
\methodName~treats LVLM-generated free text as a series of individual claims, each corresponding to a testable hypothesis. Each claim is then evaluated for its factuality using discriminative mechanisms built from the same (or auxiliary) LVLM to filter out unsupported claims and retain those that meet factual standards. To provide statistical guarantees on the factuality of retained claims, \methodName~employs a conformal prediction framework that allows control for flexible error rates, as well as error tolerance levels, defined by users
to suit specific application needs. By leveraging the generally more stable discriminative capabilities of LVLMs through calculating and ranking predefined conformity scores, \methodName~enables quantitative factual assessments. We demonstrate the effectiveness of \methodName~across three representative application domains, i.e., general scene understanding, medical radiology report generation, and document understanding,
to validate its effectiveness in ensuring a desired level of response factuality for various state-of-the-art LVLMs and explore multiple potentially useful conformity scores. Our results, which cover over $81,000$ claims generated from eight popular LVLMs, establish \methodName~as a general-purpose framework that operates with any LVLM in an assumption- and finetuning-free manner, thus promoting trustworthiness in LVLM applications broadly. Moreover, this work opens up a novel research space, where each component of \methodName~invites further innovation to improve the rigorous guarantee of hallucination mitigation for multi-modal models.

\section{Preliminaries}
\label{sec:prelim}

\subsection{Large Vision-Language Models}

Large Vision-Language Models (LVLM) are generative models that are typically composed of a visual model $h(\cdot)$, a language model parameterized by $\vtheta$, and a fusion model $g(\cdot)$.
The most popular implementations of LVLMs, such as Llava~\cite{liu2024visual}, combine a pre-trained visual encoder (e.g., CLIP~\cite{radford2021learning}) and a pre-trained Large Language Model (LLM) (e.g., Vicuna~\cite{chiang2023vicuna}) by training a projection network as the fusion model to convert extracted visual features into the LLM's embedding space in a process known as visual instruction tuning.
During inference, an LVLM takes an input image $\mI$ and a text prompt $\mX=[\rx_1, ..., \rx_l]$, and outputs a text response $\mY=[\ry_1, ..., \ry_m]$, where $\rx_i$ and $\ry_j$ are individual tokens. This is achieved by first converting the image into a sequence of visual tokens using the visual model $\mV=[\rv_1, ..., \rv_k]=g\circ h(\mI)$ and then sampling the response from the conditional distribution in an autoregressive manner:
$p_\vtheta(\mY|\mX,\mV) = \prod_{j=1}^m p_\vtheta(\ry_j|\mX,\mV,\mY_{<j})$.

\paragraph{Hallucination of LVLMs.}
The problem of hallucination originates from the space of language models, where the generated text response is either non-factual (conflicts with verifiable facts) or unfaithful (does not follow the user's instructions). In the context of LVLMs, hallucination refers to the phenomenon where the generated text response deviates from the provided visual content. Common types of LVLM hallucinations include \textit{object} hallucination (e.g., falsely identifying non-existent objects), \textit{attribute} hallucination (e.g., wrong color, shape, or material),
and \textit{relation} hallucination (e.g., human-object interaction, relative position)~\cite{bai2024hallucination}.

\subsection{Split Conformal Prediction}

Split conformal prediction (SCP)~\cite{vovk2005algorithmic,shafer2008tutorial} is a distribution-free method for quantifying the uncertainty of black-box prediction algorithms by constructing prediction sets with finite-sample coverage properties. 

\paragraph{Coverage Guarantee.}
For a black-box prediction function $f: \gX \rightarrow \gY$, let $\{(X_i, Y_i)\}_{i=1}^{n+1}$ be an exchangeable set of feature and label pairs sampled from the joint distribution on $\gX\times\gY$. The goal of split conformal prediction is to use the calibration data $\{(X_i, Y_i)\}_{i=1}^{n}$ and $f$ to construct a prediction set $\hat{C}: \gX \rightarrow 2^\gY$ for the new data point such that it achieves valid \textit{coverage}, i.e., containing the true label with high probability
$\prob\big(Y_{n+1}\in \hat{C}(X_{n+1})\big) \geq 1-\alpha$
for any user-specified error rate $\alpha \in (0, 1)$.

\paragraph{Conformal Calibration.}
Suppose there is a \textit{conformity score} function $S(X, Y)\in \sR$ that measures how well a given sample \textit{conforms} to the observed data.
The split conformal procedure uses the calibration data set $\{(X_i, Y_i)\}_{i=1}^{n}$ to derive \textit{conformity} scores $\{S(X_i, Y_i) \}_{i=1}^n$, where a larger value indicates the model is more confident about the prediction being true. To calibrate the prediction set to the desired level of coverage, we then compute a threshold $\hat{\tau}$
that is approximately the $1-\alpha$ quantile of the conformity scores.
At the time of inference, given a new data point $X_{n+1}$, we construct the prediction set as $\hat{C}(X_{n+1}) = \{y\in\gY: S(X, y) \geq \hat{\tau} \}$. If the data are exchangeable, then this prediction set will satisfy the desired coverage property.

\section{Ensuring Factuality for LVLMs}
\subsection{Problem Formulation}
Given a pair of image and text prompt $(I_{n+1}, X_{n+1})$
and a set of $n$ calibration data points, our goal is to generate a reliable response $Y^*_{n+1}$ using the LVLM, such that it contains a low error
with high probability; i.e.,
\begin{equation}\label{eq:obj}
    \prob(\gL(Y^*_{n+1}, I_{n+1}) \leq \lambda) \geq 1- \alpha,
\end{equation}
where $\gL:\gY \times \gI \rightarrow \sR^+_0$ is a monotonic \textit{loss} function that measures the level of misalignment between the statement and the image (e.g., the occurrence of object hallucination, or inaccuracy in item attribute or quantity)
and $\lambda$ is a user-specified tolerance. A larger loss indicates a greater amount of error, while a loss of zero indicates that the statement made in the response is \textit{factual} with respect to the provided image. %
We define $\gL(\varnothing, \cdot) = 0$, which indicates that we do not penalize the model for abstaining from responding when it is uncertain, as here we only focus on the assurance of factuality while neglecting other aspects of reliability such as omission of information.

\subsection{Error Control in LVLMs}

Due to the open-set and free-form nature of the natural language output, directly attempting to construct prediction sets is not attainable for generative models like LVLMs.
Instead, we adapt the recently proposed \textit{conformal factuality}~\cite{mohrilanguage,cherian2024large} framework to the multi-modal setting as the central tool for achieving statistical guarantees on the factuality of LVLM outputs.
The key idea is to exploit the connection between linguistic entailments and uncertainty sets to back off from the original statement (uncensored response from LVLM)
by gradually removing unreliable claims with high uncertainty until the desired level of correctness is achieved.

\paragraph{Error Control Procedure.}
We start by defining a scoring function $r(C^j_{n+1}, I_{n+1})\in \sR$ that captures the system's
confidence about the claim concerning the provided visual context, where a larger score indicates that the claim is more aligned with the provided image and, thus, is more likely to be true.
Given $I_{n+1}$ and $X_{n+1}$, we execute the following steps to generate a more reliable response $Y^*_{n+1}$:

\begin{enumerate}[label={(\arabic*)}]
    \item \textit{Initial Hypothesis Generation}: Sample an initial response $Y_{n+1}$ from the LVLM $p_\vtheta(Y|X_{n+1},V_{n+1})$, where $V_{n+1}=g\circ h(I_{n+1})$.
    \item \textit{Decomposition}: Apply a \textit{decomposition}
    operator $D$ to breakdown the initial response into a set of individual claims: $\mC_{n+1} = D(Y_{n+1}) = \{C^j_{n+1}\}^{s_i}_{j=1}$.
    \item \textit{Individual Hypothesis Testing}: Define the \textit{filtering} operator as $F(\mC_{n+1}; \tau) \coloneq \{C^j_{n+1}: r(C^j_{n+1}, I_{n+1}) > \tau\}$. Generate a filtered set of claims $F(\mC_{n+1}; \tau) \subseteq \mC_{n+1}$.
    This step can be thought of as testing each individual hypothesis $C^j_{n+1}$ and accepting the hypothesis only if the test statistic $r(C^j_{n+1}, I_{n+1})$ is greater than the chosen threshold $\tau$.
    \item \textit{Combination}: Apply a \textit{merge}
    operator $M$ to combine the filtered claims into the final response $Y^*_{n+1} = M\big(F(\mC_{n+1}; \tau)\big)$.
\end{enumerate}

\paragraph{Calibration.}
Next, to calibrate the filtering operator $F$, we set the conformity score $S$ for each set of claims to be the minimum threshold that ensures the loss of the filtered set of claims is controlled to be within tolerance:
\begin{equation*}
    S(\mC_{n+1}, I_{n+1})=\inf\{ \tau: \gL \big(F(\mC_{n+1}; \tau), I_{n+1} \big) \leq \lambda\}.
\end{equation*}
Finally, we implement the calibrated filtering operator as $\hat{F}(\mC_{n+1}) \coloneq F(\mC_{n+1}; \hat{\tau}) = \{C^j_{n+1}: r(I, C^j_{n+1}) > \hat{\tau}\}$, where $\hat{\tau}$ is set to be the $\frac{\ceil{(n+1)(1-\alpha)}}{n}$-th quantile of the conformity scores $\{S(\mC_{i}, I_{i}) \}_{i=1}^n$ estimated on the calibration dataset.

The following %
theorem indicates
that if the data are exchangeable, the response produced using the calibrated filtering operator 
will satisfy the error control objective in Ineq.~\ref{eq:obj}.

\begin{theorem}[SCP Coverage Guarantee~\cite{shafer2008tutorial,mohrilanguage}]
Define the error scores $\mE_{i} \coloneq \{\gL (C_i^j, I_i): C_i^j \in \mC_i\}$.
Let $\{(X_i, I_i,\mC_i, \mE_i)\}_{i=1}^{n+1}$ be exchangeable, then the following lower bound holds for any $\alpha \in (\frac{1}{n+1}, 1)$:
\begin{equation*}
     \prob\Big(\gL\big(\hat{F}(\mC_{n+1}), I_{n+1}\big) \leq \lambda\Big) \geq 1-\alpha.
\end{equation*}

If the loss function is monotonic, meaning that $\gL(\hat{F}_1(\mC_i, I_i)) \leq \gL(\hat{F}_2(\mC_i, I_i))$ for any $\hat{F}_1(\mC_i, I_i) \subseteq \hat{F}_2(\mC_i, I_i)$, then the following upper bound also holds:
\begin{equation*}
     1 - \alpha + \frac{1}{n+1} \geq \prob\Big(\gL\big(\hat{F}(\mC_{n+1}), I_{n+1}\big) \leq \lambda\Big).
\end{equation*}

\end{theorem}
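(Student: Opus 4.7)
My plan is to reduce the event $\{\gL(\hat{F}(\mC_{n+1}), I_{n+1}) \leq \lambda\}$, which is phrased in the complicated language of filtered claim sets and the loss, to a simple event about the rank of a real-valued conformity score. Once this reduction is done, the two bounds follow from the standard split-conformal quantile inequality applied to the exchangeable sequence $\{S(\mC_i,I_i)\}_{i=1}^{n+1}$.

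The first step is to prove the key identity
\[
 \bigl\{\gL(\hat{F}(\mC_{n+1}), I_{n+1}) \leq \lambda\bigr\} \;=\; \bigl\{S(\mC_{n+1}, I_{n+1}) \leq \hat{\tau}\bigr\}
\]
(at least up to a null set). For the ``$\supseteq$'' direction, note that the map $\tau \mapsto F(\mC;\tau)$ is piecewise constant and shrinks with $\tau$ (it changes only at the finitely many values $r(C^j, I)$); by the monotonicity hypothesis on $\gL$, the function $\tau\mapsto \gL(F(\mC;\tau), I)$ is nonincreasing and right-continuous, so the infimum defining $S(\mC_{n+1},I_{n+1})$ is actually attained and the sublevel set $\{\tau : \gL(F(\mC_{n+1};\tau),I_{n+1}) \le \lambda\}$ is an interval of the form $[S(\mC_{n+1},I_{n+1}),\infty)$. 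Hence $\hat\tau \ge S(\mC_{n+1},I_{n+1})$ implies loss control. The ``$\subseteq$'' direction is immediate from the definition of $S$ as an infimum: if $\gL(F(\mC_{n+1};\hat\tau),I_{n+1}) \le \lambda$ then $\hat\tau$ lies in the set whose infimum is $S(\mC_{n+1},I_{n+1})$.

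The second step applies exchangeability. Because $S(\mC_i,I_i)$ is a deterministic function of $(\mC_i,I_i,\mE_i)$, the scalars $\{S(\mC_i,I_i)\}_{i=1}^{n+1}$ inherit exchangeability from the tuples, so the rank of $S(\mC_{n+1},I_{n+1})$ among the $n+1$ scores is uniform on $\{1,\ldots,n+1\}$ (ties broken uniformly at random). Since $\hat\tau$ is the $\lceil(n+1)(1-\alpha)\rceil$-th order statistic of the $n$ calibration scores, the standard quantile inequality gives
\[
\prob\bigl(S(\mC_{n+1},I_{n+1}) \leq \hat\tau\bigr) \;\geq\; \frac{\lceil(n+1)(1-\alpha)\rceil}{n+1} \;\geq\; 1-\alpha,
\]
which, combined with the reduction of step one, yields the lower bound. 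For the upper bound, the reverse inclusion from step one (which is where the monotonicity assumption is really used) converts the loss-control probability into the same rank probability, and the same quantile computation gives
\[
\prob\bigl(S(\mC_{n+1},I_{n+1}) \leq \hat\tau\bigr) \;\leq\; \frac{\lceil(n+1)(1-\alpha)\rceil}{n+1} \;\leq\; 1-\alpha + \frac{1}{n+1}.
\]

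The main technical obstacle is the first step: verifying that the infimum defining $S$ is achieved and that the event equivalence is exact (rather than just a one-sided inclusion), because the filtering operator uses a strict inequality $r(C^j,I)>\tau$. This is precisely why the piecewise-constancy observation and the right-continuity of $\tau\mapsto \gL(F(\mC;\tau),I)$ matter, and it is also where the monotonicity assumption is essential for the upper bound. The rank argument itself is routine once one commits to a symmetric (e.g., uniform random) tie-breaking rule, so no additional distinctness assumption is required.
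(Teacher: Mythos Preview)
Your approach is essentially the paper's: reduce the loss-control event to the rank event $\{S(\mC_{n+1},I_{n+1})\le\hat\tau\}$ and then invoke the standard exchangeability/quantile argument. Your step one is in fact more careful than the paper's proof, which simply asserts the implication ``under the definition of $S$'' without discussing right-continuity or attainment of the infimum.

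There is one bookkeeping slip. You correctly prove the ``$\supseteq$'' inclusion (i.e.\ $\{S_{n+1}\le\hat\tau\}\subseteq\{\text{loss control}\}$) by using monotonicity of $\gL$ to make $\tau\mapsto\gL(F(\mC;\tau),I)$ nonincreasing, and you correctly note that ``$\subseteq$'' is immediate from the infimum definition. But in your closing paragraph you say the upper bound ``is where the monotonicity assumption is really used,'' which is backwards: the \emph{lower} bound needs ``$\supseteq$'' (hence monotonicity), while the \emph{upper} bound needs only ``$\subseteq$'' (no monotonicity required). The paper's theorem statement and proof make the same misattribution, so you are matching it, but your own careful analysis in step one already has the dependence the right way around.
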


\begin{proof}
Without loss of generality, let us assume that the conformity scores are sorted as $s_1 < s_2  < ... s_n$, where $s_i = S(\mC_{i}, I_{i})$.
Notice that under the definition of $S$, the event $\{s_{n+1} \leq \hat{\tau}\}$ implies $\{ \gL \big(\hat{F}(\mC_{n+1}), I_{n+1} \big) \leq \lambda \}$.
By exchangeability, $\prob(s_{n+1} \leq s_{\ceil{(n+1)(1-\alpha)}}) = \frac{\ceil{(n+1)(1-\alpha)}}{n+1} \geq 1 - \alpha$, which implies the result.
To prove the upper bound, notice that the two events $\{s_{n+1} \leq \hat{\tau}\}$ and $\{ \gL \big(\hat{F}(\mC_{n+1}), I_{n+1} \big) \leq \lambda \}$ are now equivalent if the loss function is monotone. The result can then be obtained through $\prob(s_{n+1} \leq s_{\ceil{(n+1)(1-\alpha)}}) = \frac{\ceil{(n+1)(1-\alpha)}}{n+1} \leq \frac{(1-\alpha)(n+1) +1}{n+1} = 1 - \alpha + \frac{1}{n+1}$.

\end{proof}

\subsection{Deriving Conformity Scores}

In practice, the decomposition operator can be implemented by prompting the language model part of the LVLM, which does not rely on any external resources.
To derive the conformity scores, we will need to find a suitable scoring function $r(C_i, I_i)$.
Built on conformal prediction, our framework should maintain valid coverage with any arbitrary heuristic scores. However, in practice, a score that better captures the relevance between a claim $C_i$ and the given image $I_i$ can enable a better tradeoff (i.e., allowing the same coverage guarantee while filtering out less content).
We primarily consider the following two types of scores.

\paragraph{Internal Scores.}
We consider the following scores to capture the internal confidence of an LVLM regarding a statement.
(1) \textit{Log Probability of Text Tokens}: we compute the log probability of text tokens from the claim given only the text prompt as $r(C_i, I_i) = \log p_\vtheta(C_i|X_i)$, which does not make use of the visual context and thus serves as a language prior baseline.
(2) \textit{Log Probability of Text Tokens Conditioned on Image}: we compute the log probability of text tokens from the claim conditioned on both the visual and the text prompt as $r(C_i, I_i) = \log p_\vtheta(C_i|X_i, V_i)$, which is the visual instruction tuning objective~\cite{liu2024visual} of most LVLMs.
(3) \textit{Log Probability Ratio}: finally, we consider the ratio between the two probabilities, i.e.,  $r(C_i, I_i) = \log \frac{p_\vtheta(C_i|X_i, V_i)}{p_\vtheta(C_i|X_i)}$. This is motivated by the observation that most hallucinations in LVLMs occur because their language prior tend to dominate visual perception during decoding~\cite{favero2024multi,leng2024mitigating,liu2024paying}, and the probability ratio can be an informative measure of the true influence of the visual prompt regardless of the language prior.

\paragraph{External Scores.}
In addition to internal scores, we also consider capturing the confidence in a statement regarding an image using a (lightweight) external model. \textit{Energy-based models}~\cite{lecun2006tutorial} $E: \gY \times \gI \rightarrow \sR$ are particularly suited for this task as they are trained to map image-text pairs to a scalar energy score so that $e^{-E(Y, I)} \propto \prob(Y, I)$. As such, we can simply set the scoring function to return the negative energy score, 
$r(C_i, I_i)=-E(C_i, I_i)$.

\section{Case Study I: General Scene Understanding}
\subsection{Setup}

We first evaluate 
the LVLM factuality framework for general scene understanding tasks.
To do so, we use 
$500$ randomly selected images from the MSCOCO~\cite{lin2014microsoft} validation set with more than three objects (same as the POPE dataset~\cite{li2023evaluating}).

\paragraph{LVLMs.}
We use four start-of-the-art LVLMs for the evaluation of the scene understanding task, including three open-sourced models \modelname{LLaVA-1.5}~\cite{liu2024visual}, \modelname{Phi-3.5-vision-instruct}~\cite{abdin2024phi}, \modelname{Llama-3.2-11B-Vision}~\cite{metaLlama32}, and one close-sourced model \modelname{GPT-4o-mini}~\cite{openaigpt4omini}.
We prompt each LVLM to generate a detailed description for each image and decompose the original response into independent claims.
Our prompts are provided in the Appendix.

\paragraph{Error Annotation and Loss Function.} 
We consider the following five types of errors for scene understanding:
(1) \textit{Object identification}: The claim involves hallucinated or wrongly identified objects;
(2) \textit{Attribute (in)accuracy}: The claim involves incorrect attributes (e.g., color, size, shape);
(3) \textit{Spatial relations}: The claim involves incorrect spatial relationships between objects;
(4) \textit{Interaction/Action (in)accuracy}: The claim involves incorrect or hallucinated action or interaction;
and (5) \textit{Quantitative information}: The claim involves incorrect numeric details (e.g., the wrong object count).
We prompt \modelname{GPT-4o} to label each claim as either correct or belonging to one or more error categories.
In practice, the loss function can be tailored according to %
specific use cases. For 
our experiments, we consider a cumulative loss function that assigns a loss score to each response based on the total error its claims contain. Specifically, all claims start with a loss score equal to $0$. For each ``Object identification'' error contained in a claim, the loss is increased by $3$; for every other type of error contained, the loss is increased by $1$. A correct claim thus receives a loss of $0$.
This choice of loss structure reflects the common consensus that hallucinating non-existing objects is typically more harmful compared to other types of hallucinations.

\paragraph{Scoring Function.}
We use two configurations of pretrained CLIP~\cite{radford2021learning} models, \modelname{CLIP-ViT-Base} with $32px$ patch size and \modelname{CLIP-ViT-Large} with $14px$ patch size, to derive the normalized image and text embeddings and compute the dot product as external confidence scores.

\begin{figure}
    \centering
\includegraphics[width=0.56\linewidth]{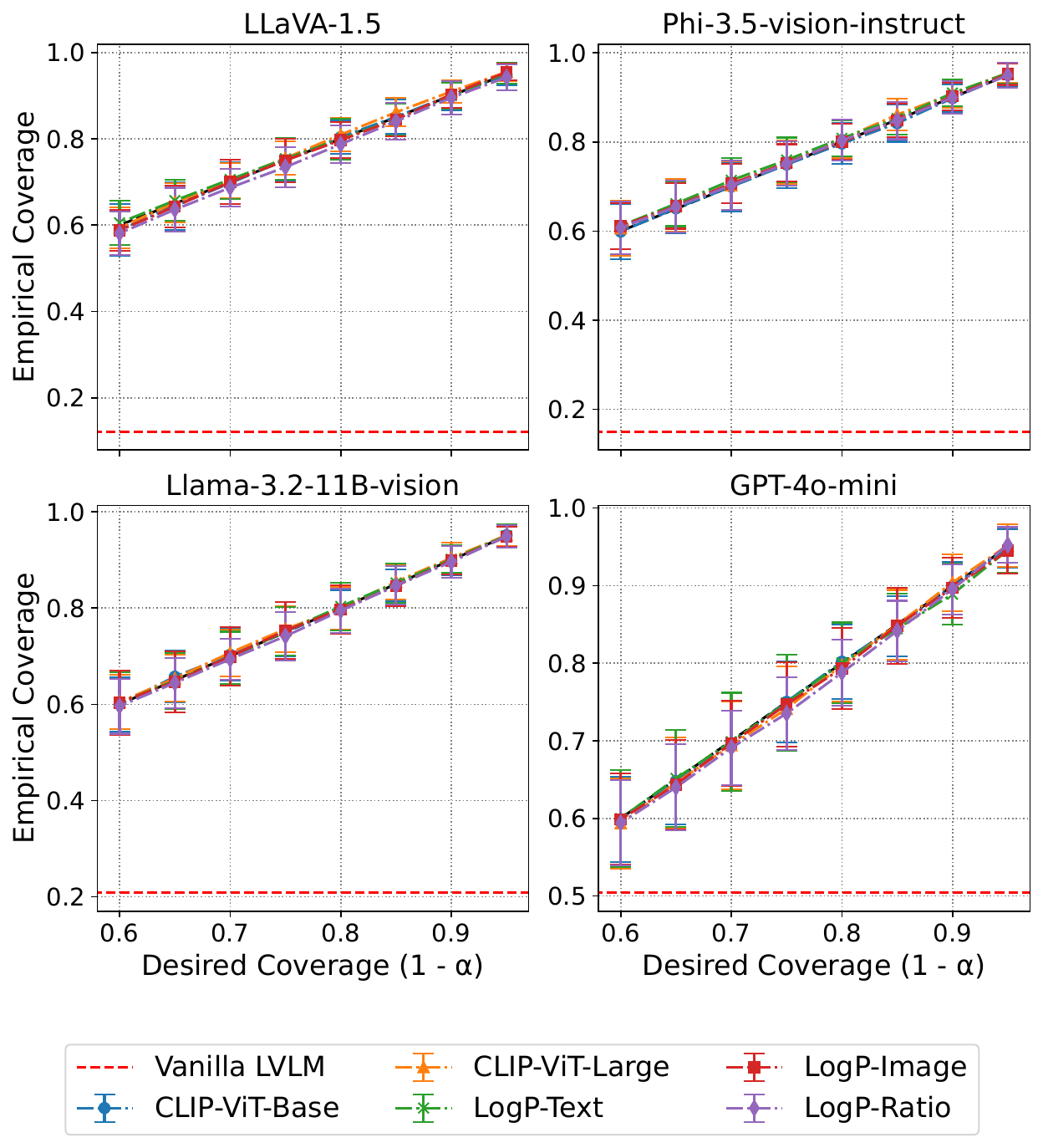}
    \caption{Alignment between empirical and desired (theoretical) coverage in \textit{scene understanding} (with $\lambda=0$).
    Vanilla LVLM (red dashed line) refers to the base setting where the LVLM-generated responses are returned to users without using \methodName.
    }
    \label{fig:pope_coverage}
\end{figure}

\subsection{Results}

\paragraph{\methodName~Achieves Any Desired Level of Coverage.}
We
examine the validity of \methodName~by measuring the empirical coverage (ratio of responses that satisfy $\gL\big(\hat{F}(\mC_{n+1}), I_{n+1})\big) \leq \lambda$ over total number of responses) under different levels of desired (theoretical) coverage determined by $1-\alpha$.
We set the error tolerance $\lambda$ to $0$ (most restrictive) and report the results over $50$ random splits of calibration ($400$ data points for establishing conformal prediction) and test data ($100$ data points for computing the empirical coverage). %
The results shown in Fig.~\ref{fig:pope_coverage} confirm that \methodName~can
achieve the desired level of coverage with all types of scoring functions and for all LVLMs considered. 
By contrast, Vanilla LVLM (i.e., responses without any filtration) leads to significantly low coverage that signals a failure in the model's reliability.

\begin{figure}
    \centering
\includegraphics[width=0.56\linewidth]{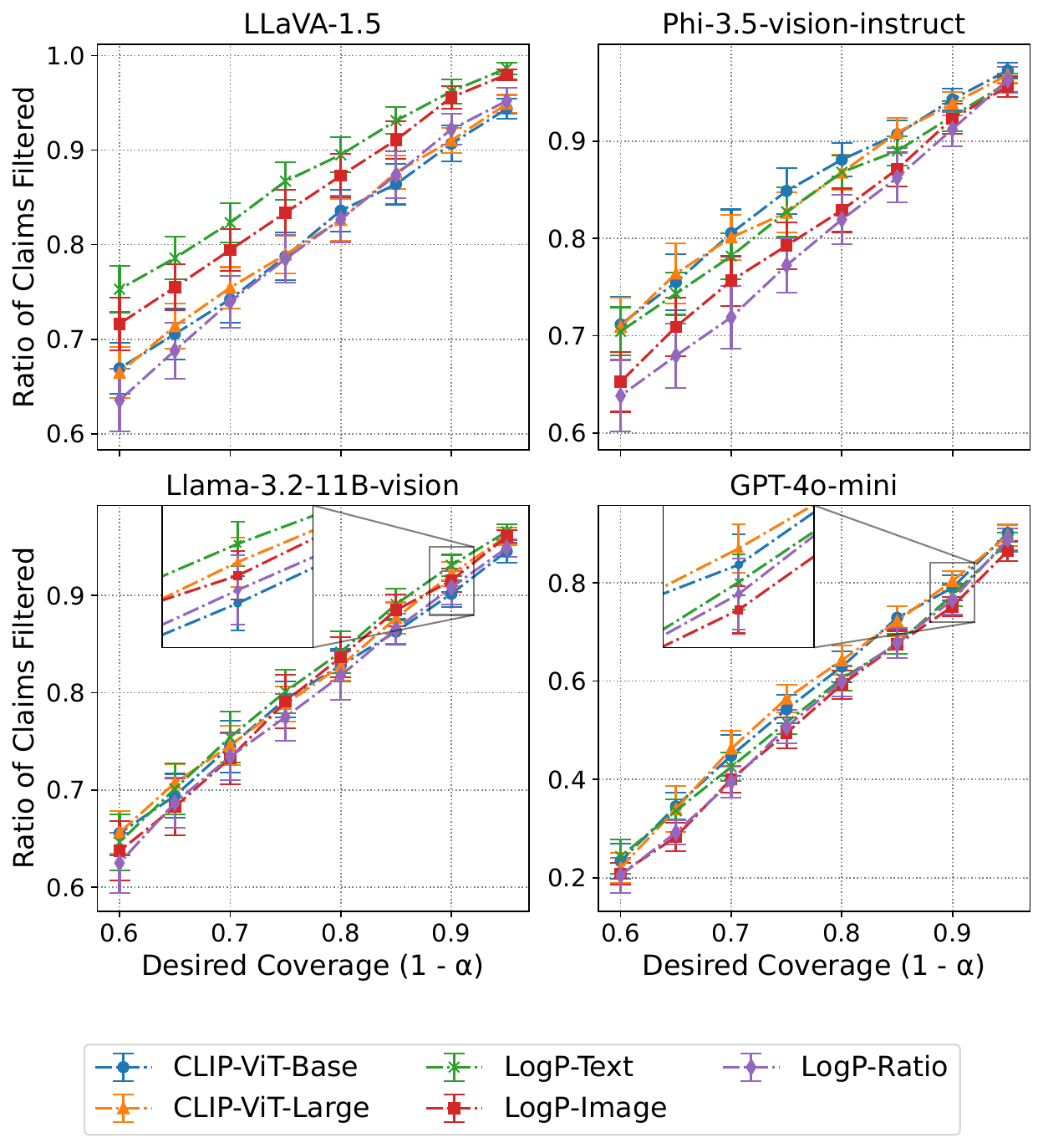}
    \caption{Average ratio of claims filtered with varying coverage using different scoring functions in \textit{scene understanding} (with $\lambda=0$). Standard errors are marked.}
    \label{fig:pope_cont}
\end{figure}

\paragraph{\methodName~Achieves Higher Filtering Efficiency Than Baselines.}
Approaching the desired coverage level of $1-\alpha$ involves flagging and filtering out low-confidence claims and, in some cases, abstaining from providing a response.
Next, we analyze the ratio of claims being filtered out by \methodName~(Fig.~\ref{fig:pope_cont}) and the associated rate of abstention  (Fig.~\ref{fig:pope_abstent}).
We observe a general trend where the ratio of filtered claims and the abstention rate increase as the desired coverage level rises.
This is expected as the unfiltered responses from LVLMs contain a large number of non-factual claims (e.g., $87.8\%$ responses from \modelname{LLaVA-1.5} are erroneous), and thus ensuring a lower error rate requires the framework to be more conservative and filter more content. When comparing across different scoring functions, it can be seen that, in the case of \modelname{LLaVA-1.5}, the external confidence scores based on \modelname{CLIP} models 
achieve a lower ratio of filtered claims and also a lower abstention rate than
internal scores. This is notable
because \modelname{LLaVA-1.5} uses \modelname{CLIP} as the visual encoder, which implies that there may be certain deficiencies in the visual instruction tuning process.
However, such a trend is weaker or non-existent
with other LVLMs. This may be because claims generated by other LVLMs are typically longer and contain more details, which is 
more difficult to capture using \modelname{CLIP}.
When comparing across LVLMs, we see that \modelname{GPT-4o-mini} requires filtering out much fewer claims to achieve the desired coverage, which is because \modelname{GPT-4o-mini} has better empirical factuality performance compared to other models.

\begin{figure}
    \centering
\includegraphics[width=0.56\linewidth]{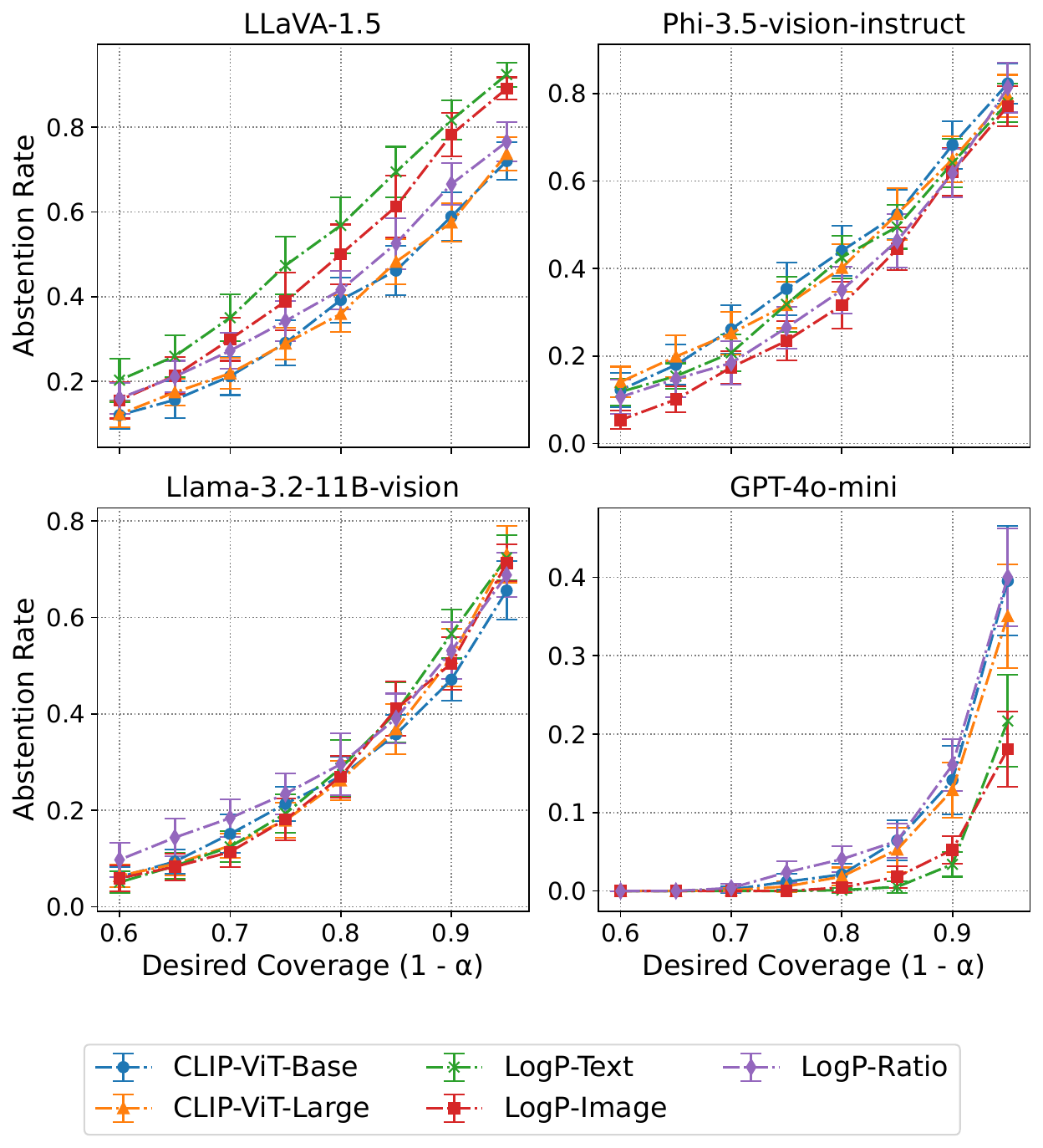}
    \caption{Abstention rate with varying coverage using different scoring functions in \textit{scene understanding} (with $\lambda=0$).}
    \label{fig:pope_abstent}
\end{figure}

Now we analyze how accurate \methodName~is in filtering out nonfactual claims.
To our knowledge, there are no existing baselines with statistical factuality guarantees. Thus, we consider a simple \textit{Random Filtering} baseline beyond Vanilla LVLM, which drops claims uniformly at random with probability $\alpha$.
As shown in Table~\ref{tab:pope_res}, for \modelname{LLaVA-1.5} with the setting of $\alpha=0.1, \lambda=0$, \methodName~achieves a high true positive rate (TPR, or Recall) of $0.953$ and a relatively high F1 score of $0.504$. Vanilla LVLM, in contrast, has a TPR of $0$ and F1 of $0$, as it does not perform any filtration. When randomly dropping $10\%$ claims, Random Filtering reaches a TPR of $0.104$ and an F1 of $0.158$, which are approximately $9\times$ and $3.2\times$ lower than those achieved by \methodName. We have similar observations for other LVLMs and these highlight the utility of \methodName~in identifying nonfactual claims. 
It is also evident that other LVLMs with \methodName~implemented demonstrate lower values of TPR and F1 compared to \modelname{LLaVA-1.5}, e.g., \modelname{GPT-4o-mini} achieves a TPR of $0.850$ and an F1 of $0.100$. This occurs because models like \modelname{GPT-4o-mini} typically generate more comprehensive descriptions of an image, leading to a higher number of claims, and, at the same time, a smaller proportion of nonfactual claims
(examples in Appendix).
In other words, the task of identifying nonfactual claims itself is much harder for \modelname{GPT-4o-mini} than for \modelname{LLaVA-1.5}. Nonetheless, \methodName~still outperforms the baseline approaches. 

\begin{table}[t]
    \centering
    \caption{Claim-level results on the \textit{scene understanding} task using \modelname{CLIP-ViT-Large} as scoring function (\methodName~with $\alpha=0.1,\lambda=0$).}
    \vspace{2mm}
    \label{tab:pope_res}
    \resizebox{0.56\linewidth}{!}{
\begin{tabular}{l|c|cc}
\toprule
LVLM                    & \textbf{Configuration}                                                                & \textbf{TPR$\uparrow$}                                    & \textbf{F1$\uparrow$}                    \\ \toprule
\modelname{LLaVA-1.5}               & Vanilla LVLM                                                                               & 0.0                                                        & 0.0                            \\
& \begin{tabular}[c]{@{}c@{}}Random Filtering\end{tabular}                                                                              & 0.104                                                       & 0.158                            \\
                        & \cellcolor[HTML]{EFEFEF}\begin{tabular}[c]{@{}c@{}}\methodName\end{tabular} & \cellcolor[HTML]{EFEFEF}\textbf{0.953} & \cellcolor[HTML]{EFEFEF}\textbf{0.504} \\ \midrule
\begin{tabular}[l]{@{}c@{}}\modelname{Phi-3.5-vision}\end{tabular} & Vanilla LVLM                                                                               & 0.0                                                       & 0.0                            \\
\modelname{-instruct} & \begin{tabular}[c]{@{}c@{}}Random Filtering\end{tabular}                                                                               & 0.102                                                        & 0.145                            \\
                        & \cellcolor[HTML]{EFEFEF}\begin{tabular}[c]{@{}c@{}}\methodName\end{tabular} & \cellcolor[HTML]{EFEFEF}\textbf{0.945}  & \cellcolor[HTML]{EFEFEF}\textbf{0.401} \\ \midrule
\modelname{Llama-3.2-11B-vision}    & Vanilla LVLM                                                                              & 0.0                                                       & 0.0                            \\
& \begin{tabular}[c]{@{}c@{}}Random Filtering\end{tabular}                                                                               & 0.099                                                       & 0.121                            \\
                        & \cellcolor[HTML]{EFEFEF}\begin{tabular}[c]{@{}c@{}}\methodName\end{tabular} & \cellcolor[HTML]{EFEFEF}\textbf{0.936} & \cellcolor[HTML]{EFEFEF}\textbf{0.269} \\ \midrule
\modelname{GPT-4o-mini}             & Vanilla LVLM                                                                               & 0.0                                                   & 0.0                            \\
& \begin{tabular}[c]{@{}c@{}}Random Filtering\end{tabular}                                                                               & 0.106                                                      & 0.070                            \\
                        & \cellcolor[HTML]{EFEFEF}\begin{tabular}[c]{@{}c@{}}\methodName\end{tabular} & \cellcolor[HTML]{EFEFEF}\textbf{0.850} & \cellcolor[HTML]{EFEFEF}\textbf{0.100} \\ \bottomrule
\end{tabular}
    }
\end{table}

\paragraph{Error Tolerance Allows Flexible Control Over Coverage-Utility Tradeoff.}
To study the impact of error tolerance, we plot the ratio of claims filtered and abstention rate using \modelname{Llama-3.2-11B-Vision} with varying $\lambda$ while keeping $\alpha=0.1$ in Fig.~\ref{fig:pope_err_a}. We observe the expected behavior that \methodName~will filter out less content and abstain less as the error tolerance increases.
In Fig.~\ref{fig:pope_err_b}, we plot the model's response distributions with different error tolerances and using \modelname{CLIP-ViT-Large} as the scoring function.
When the error tolerance is set to $\infty$, i.e., using the raw LVLM output without censoring, the model can generate detailed responses ($75\%$ responses have $\geq 10$ claims) but also have high risks of hallucination (more than $50\%$ responses have loss $\geq 3$).
As the error tolerance decreases, the loss of responses gradually reduces with the number of claims. For example, an error tolerance is set to $3$, which reduces more than $75\%$ of responses to below $2$, with the median of the number of claims being $4$.
This shows that error tolerance can serve as an additional tuning knob that allows the user to flexibly choose the desired level of error within the acceptable range of utility.

\section{Case Study II: Medical Report Generation}
\subsection{Setup}
Next, we evaluate \methodName~on the radiology report generation task. For this evaluation, we use a subset of $500$ chest X-ray images from the MIMIC-CXR~\cite{johnson2019mimic} dataset, each from a distinct patient.

\paragraph{LVLMs.} We consider the following three medical-domain LVLMs for this task:
(1) \modelname{LlaVa-Med}~\cite{li2024llava} is a biomedical LVLM instruction-tuned on several corpora in the biomedicine domain. We use the latest v1.5 \modelname{Mistral} 7B version.
(2) \modelname{CvT2DistilGPT2}~\cite{nicolson2023improving} is LVLM based on the encoder-to-decoder architecture developed for chest X-ray report generation. The originally released model weights are trained on the MIMIC-CXR dataset. To avoid data leakage, we retrain the model on a disjoint subset of MIMIC-CXR that does not contain any patients involved in our evaluation.
(3) \modelname{MAIRA-2}~\cite{bannur2024maira} is the latest radiology-specific LVLM developed by Microsoft Research. It is based on a similar architecture as \modelname{LlaVa}, featuring a \modelname{Rad-DINO} visual encoder and a language model based on \modelname{Vicuna} 7B v1.5 for grounded report generation.

\begin{figure}
    \centering
    \begin{subfigure}[b]{\linewidth}
        \centering
        \includegraphics[width=0.76\linewidth]{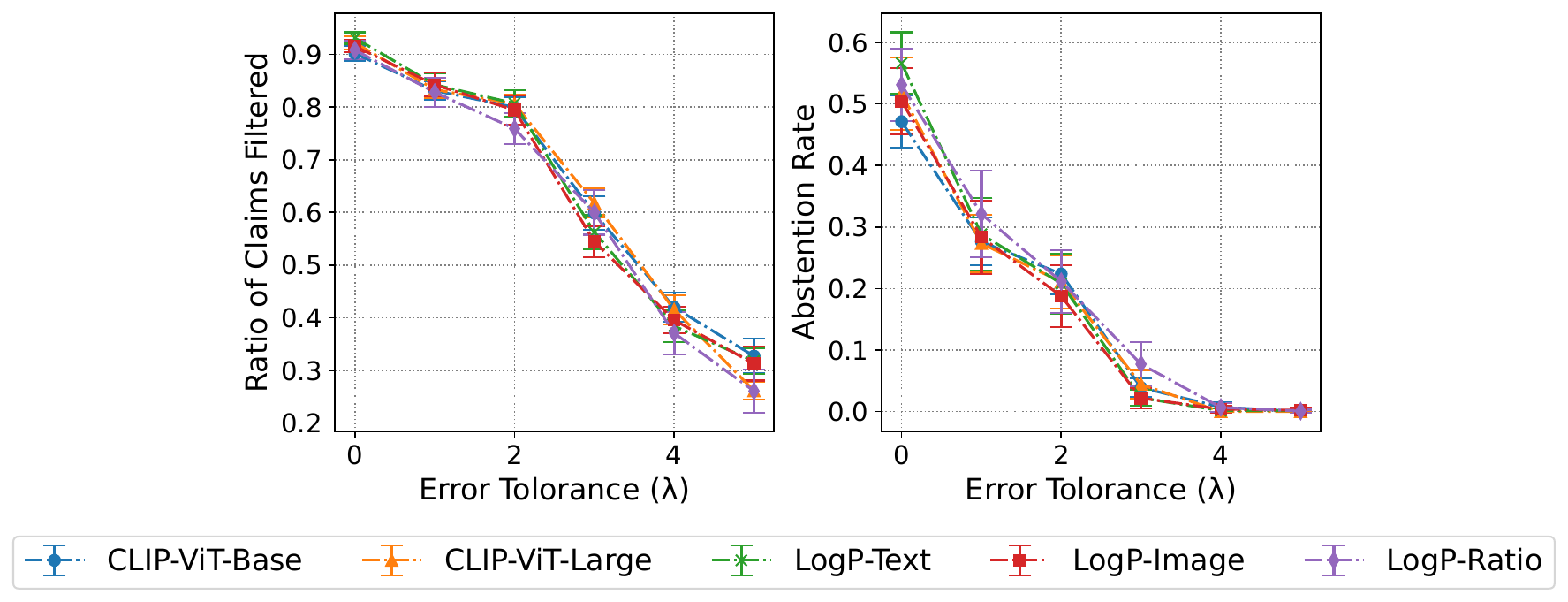}
        \caption{Ratio of claims filtered and abstention rate}
        \label{fig:pope_err_a}
    \end{subfigure}

    \begin{subfigure}[b]{\linewidth}
        \centering
        \includegraphics[width=0.54\linewidth]{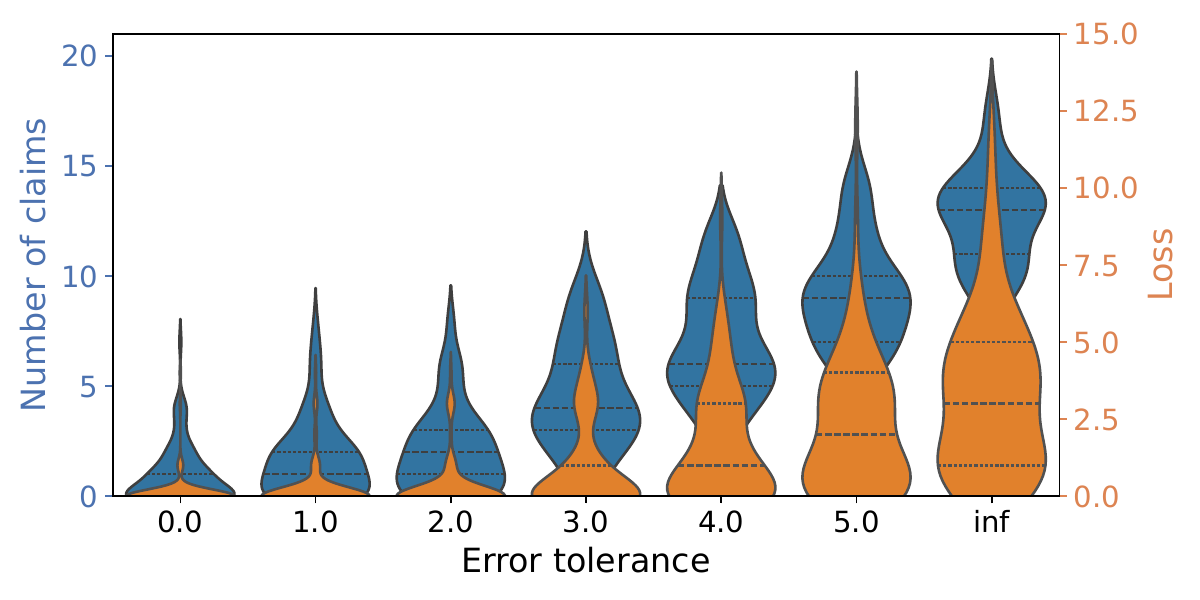}
        \caption{Response distributions using \modelname{CLIP-ViT-Large} as scoring function}
        \label{fig:pope_err_b}
    \end{subfigure}
    \caption{Comparison of \modelname{Llama-3.2-11B-Vision}'s response with different error tolerances ($\lambda$) while fixing $\alpha=0.1$ in \textit{scene understanding}.}
    \label{fig:pope_err}
\end{figure}

\paragraph{Error Annotation and Loss Function.}
We leverage \modelname{GPT-4o} for error annotation. To ensure quality, we provide \modelname{GPT-4o} with the ground truth report written by qualified physicians in addition to the chest X-ray images. This eliminates the requirement for \modelname{GPT-4o} to understand the actual medical image as it can verify the veracity of each claim by just checking if it is entailed by the ground truth report.
Specifically, the errors are categorized as:
(1) \textit{Conflicting error}: The claim directly contradicts information provided in the ground truth report;
(2) \textit{Implausible error}: The claim does not directly conflict with or align with the ground truth report, and is implausible within the given context;
and (3) \textit{Plausible error}: The claim does not directly conflict with or align with the ground truth report, but remains plausible within the given context.
Similar to scene understanding, we assign each occurrence of errors (1)-(3) a loss of $3$, $2$, and $1$, respectively, and compute the accumulated loss as the final loss for each response.

\paragraph{Scoring Function.}
We use \modelname{BiomedCLIP}~\cite{zhang2023biomedclip} to compute the similarity between claim text and image pairs as the external confidence score.

\begin{figure}
    \centering
\includegraphics[width=0.8\linewidth]{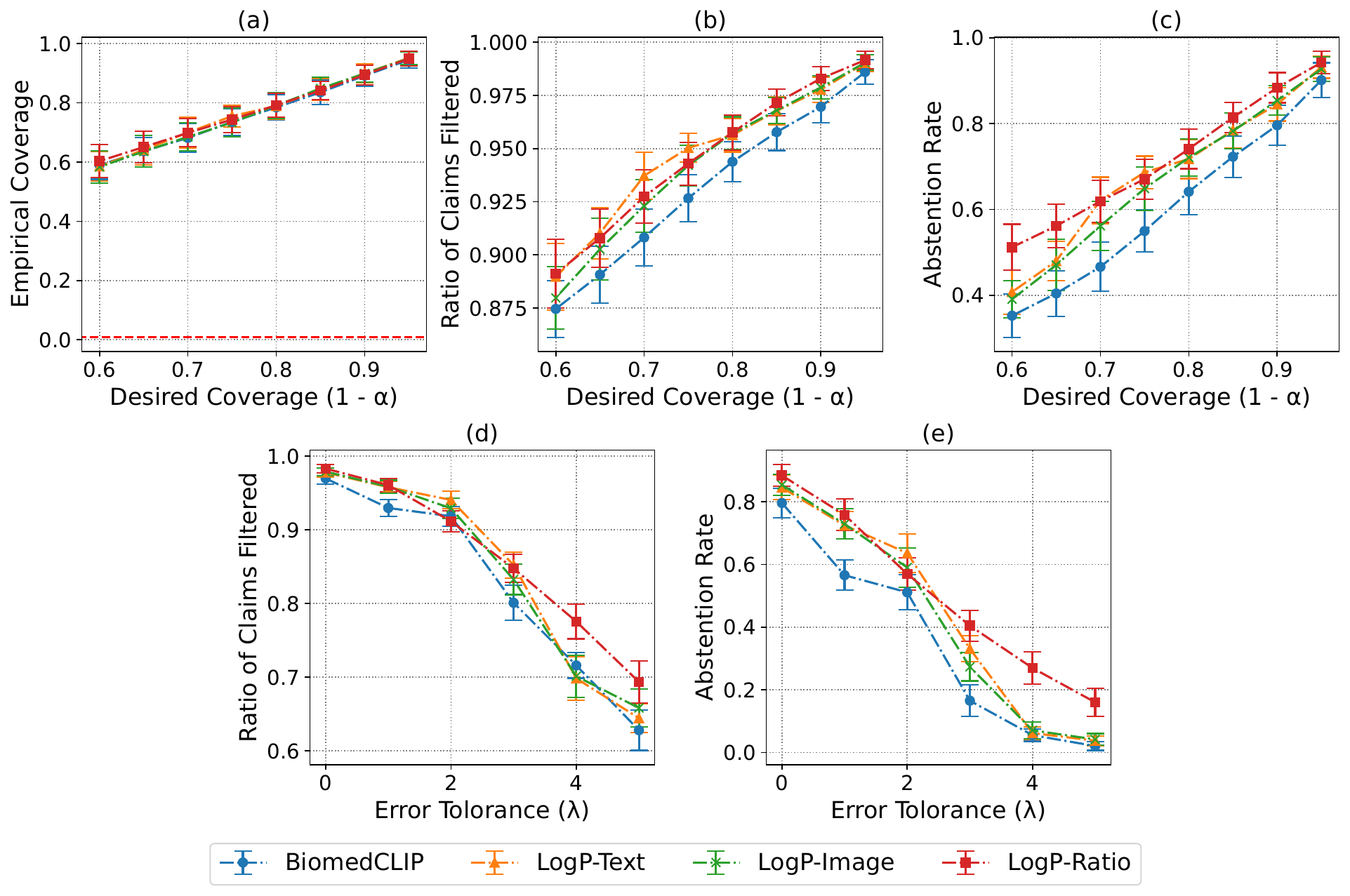}
    \caption{\modelname{MAIRA-2} results in \textit{medical report generation}.}
    \label{fig:mimic_merged}
\end{figure}

\subsection{Results}
Given limited space, we present the results of \modelname{MAIRA-2} and defer the results of other LVLMs to the Appendix.

We plot the empirical coverage versus desired coverage in Fig.~\ref{fig:mimic_merged}a.
The results verify that \methodName~can achieve tight error control in the revised responses after filtering. This is particularly important considering that medical report generation is a much more challenging task that requires precise control over the risk of output hallucination.

We plot the average ratio of claims filtered and abstention rate at various desired levels of coverage in Fig.~\ref{fig:mimic_merged}b and Fig.~\ref{fig:mimic_merged}c, respectively, given fixed $\lambda=0$.
We observe a trend similar to the scene understanding task, where the ratio of filtered claims and abstention rate increases with the desired coverage level. Notably, given a fixed coverage level, \modelname{BiomedCLIP} archives the lowest ratio of filtered claims and abstention rate among all scoring functions.
Fig.~\ref{fig:mimic_merged}d and Fig.~\ref{fig:mimic_merged}e present the results with varying error tolerance while keeping $\alpha=0.1$. In particular, changing $\lambda$ from $0$ to $3$ reduces the abstention rate by more than half, while still maintaining a relatively low error (compared to the median of the loss of unfiltered responses, which is $7$).

\begin{figure}
    \centering
\includegraphics[width=0.8\linewidth]{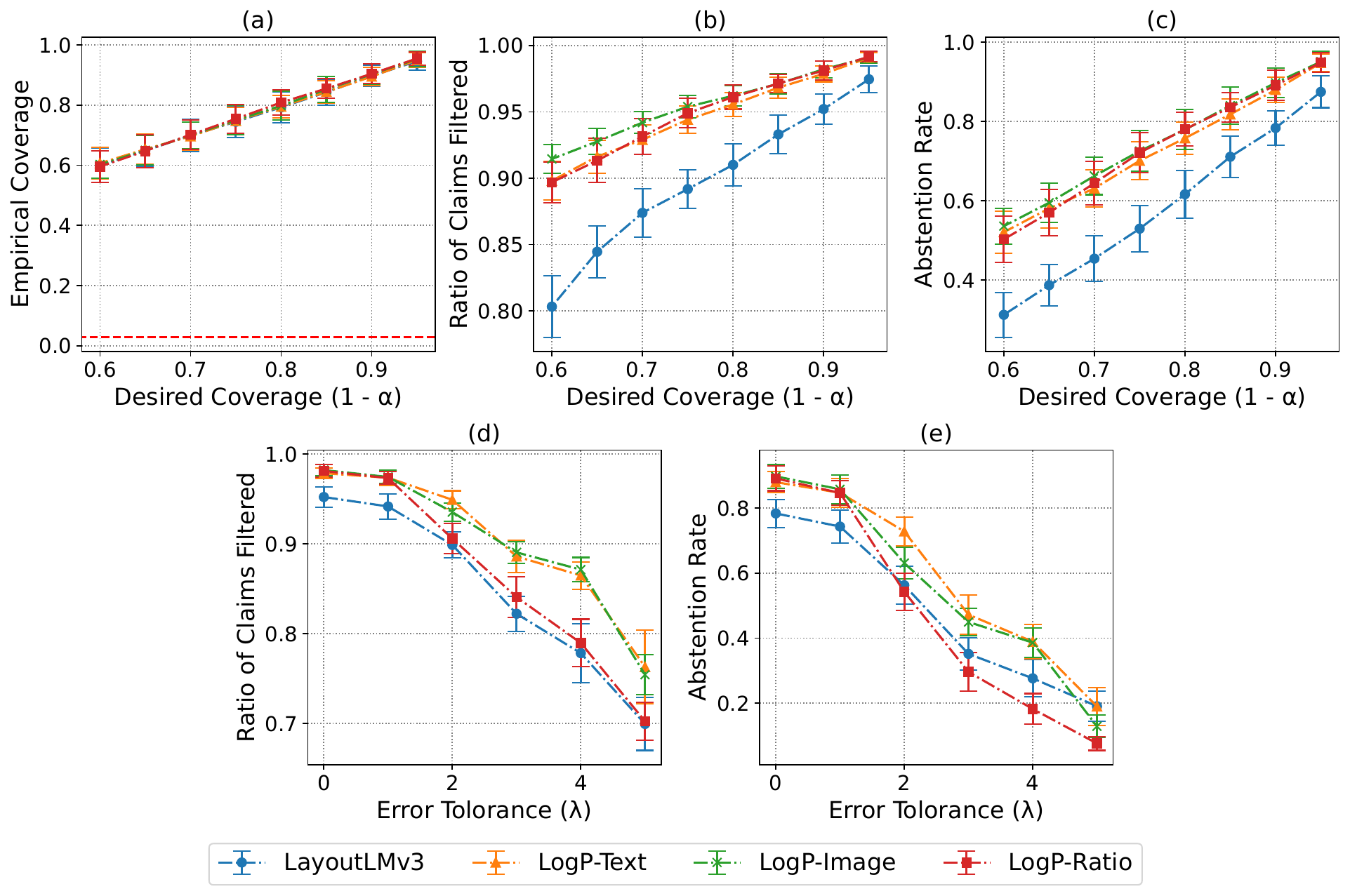}
    \caption{\modelname{LLaVA-NeXT} results in \textit{document understanding}.}
    \label{fig:sroie_merged}
\end{figure}

\section{Case Study III: Document Understanding}
\subsection{Setup}
Finally, we evaluate \methodName~on the document understanding task, where we randomly select $500$ invoice scan/images from the SROIE~\cite{huang2019icdar2019} dataset.

\paragraph{LVLMs.}
We consider two LVLMs, \modelname{LLaVA-Next}~\cite{liu2024llavanext}, which is the latest model in the \modelname{LLaVA} family with enhanced visual reasoning and OCR capabilities, and \modelname{Phi-3.5-vision-instruct}.

\paragraph{Error Annotation and Loss Function.}
We consider the following error types for this task:
(1) \textit{Field misinterpretation}: Incorrectly identify important fields such as mistaking "Subtotal" for "Total Amount", or misrecognizing non-existing fields.
(2) \textit{Numerical and quantitative errors}: Incorrect amounts, totals, or quantity values, as well as calculation discrepancies (e.g., subtotal, tax, and total relationship).
(3) \textit{Date error}: Misrecongizing date or misinterpreting date formats.
(4) \textit{Item error}: Misrecongizing item or item details, or falsely identifying non-existing items.
(5) \textit{Other errors}: Other errors such as misspelling or misrecognizing character, layout, and alignment issues.
Each occurrence of Numerical and Date Errors gets an additional loss of $3$, each occurrence of field or item error gets an additional loss of $2$, and the occurrence of other errors gets an additional loss of $1$. Similar to other tasks, we compute the accumulated loss for each response.

\paragraph{Scoring Function.}
We use \modelname{LayoutLMv3}~\cite{huang2022layoutlmv3} which is a pre-trained multi-modal transformer to derive embeddings for the claim text and document images and compute their cosine similarity as the external confidence score.

\subsection{Results}
We now show the results of \modelname{LLaVA-NeXT} and defer the results of \modelname{Phi-3.5-vision-instruct} to the Appendix.

We first verify the alignment of empirical coverage and desired coverage in Fig.~\ref{fig:sroie_merged}a. The results show that \methodName~can achieve the precise coverage on the document understanding task as desired.

We investigate the ratio of filtered claims and abstention rate with varying $\alpha$ and fixed $\lambda=1$ in Fig.~\ref{fig:sroie_merged}b and Fig.~\ref{fig:sroie_merged}c, respectively, and with varying $\lambda$ and fixed $\alpha=0.1$ in Fig.~\ref{fig:sroie_merged}d and Fig.~\ref{fig:sroie_merged}e, respectively. Besides the general trend that increasing desired coverage or reducing the error tolerance would result in filtering out more content and more frequently abstaining, we additionally observe that \modelname{LayoutLMv3} achieves significantly lower rates compared to other scoring functions based on the LVLM's internal confidence, e.g., preserving approximately $10\%$ more content when $\alpha=0.4$. This shows that a small dedicated model is more accurate than the prominent LVLMs in terms of verifying the factuality of claims regarding document images.

\section{Discussion}

\paragraph{Impact of Calibration Data Size.}
We have considered a fixed calibration data size of $400$.
To study the impact of calibration data size, we use \modelname{Llama-3.2-11B-Vision} as an example and vary the calibration data from $50$ to $400$ samples, each with $50$ random train-test splits, and plot the empirical coverage and ratio of claims filtered of three sets of $(\alpha, \lambda)$ parameters in Fig.~\ref{fig:pope_cali}.
We observe that \methodName~can consistently achieve the desired level of coverage while maintaining the same ratio of filtered claims regardless of the calibration data size, though a larger calibration dataset could help reduce the result variance.

\begin{figure}
    \centering
\includegraphics[width=0.8\linewidth]{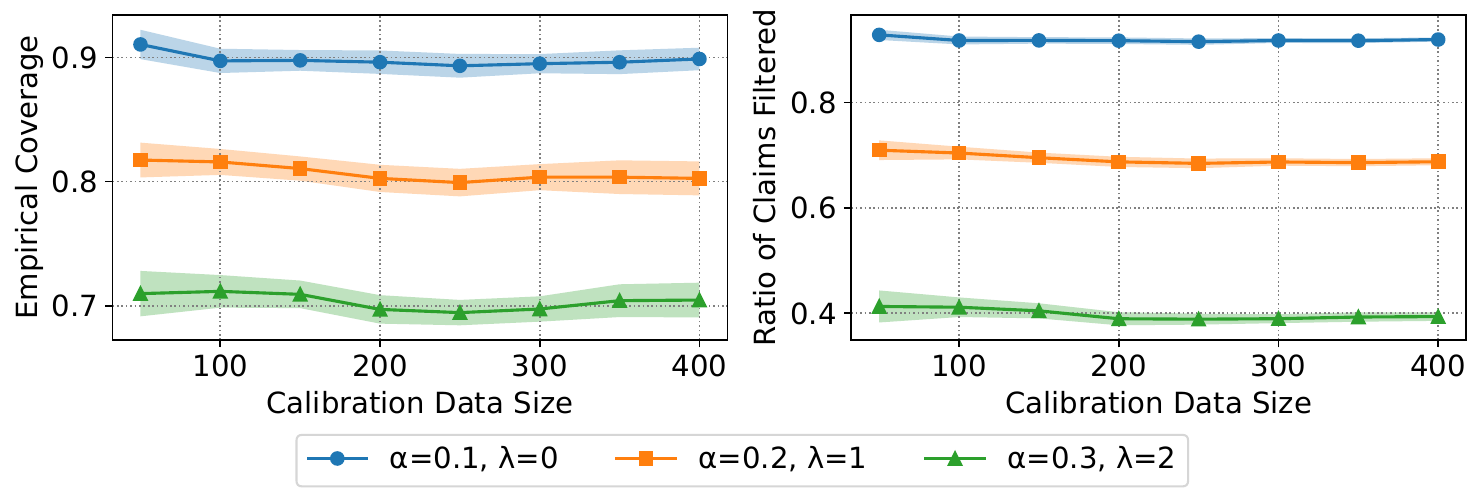}
    \caption{Impact of calibration data size on the empirical coverage and utility of \modelname{Llama-3.2-11B-Vision} on the \textit{scene understanding} task (with $95\%$ CI).}
    \label{fig:pope_cali}
\end{figure}

\paragraph{Discriminative vs. Generative Models.}
In our experiments, we observe that in many cases small discriminative vision-language models (e.g., \modelname{BiomedCLIP} for medical report generation and \modelname{LayoutLMv3} for document understanding) outperform LVLMs in terms of capturing the relevance between a text claim and an image. One potential reason is that compared to large generative models, small discriminative models are easier to optimize and can learn useful representations more efficiently. This hints that besides serving as the image encoder, these models can be used as critics to censor LVLM outputs and improve factuality with lower computational costs.

\paragraph{Coverage vs. Reliability.}
Although our method can achieve the precise coverage as specified by the user, the statistical guarantee only holds marginally with split conformal prediction. However, conditional guarantees may be required for certain applications, e.g., to ensure health equity among groups of patients in healthcare. Future work could consider the integration with advanced conformal methods to achieve conditional validity~\cite{gibbs2023conformal}.
Besides coverage, investigating other important aspects of LVLM reliability, such as omission, and designing better scoring functions to achieve the same level of coverage while preserving more content are also interesting avenues for future research.

\section{Conclusion}
In this work, we propose \methodName, a framework for achieving statistical factuality guarantee of LVLM output through decomposing responses into individual verifiable hypotheses and filtering out those with low confidence given the image content. We demonstrate with three application domains that by choosing the desired error rate and tolerance, \methodName~offers users flexible control over the hallucination risk of LVLM output.

\bibliographystyle{unsrt}  
\bibliography{reference}  %

\newpage
\appendix

\section{Implementation Details}

\paragraph{Prompts for Image-conditioned Free-text Generation.} We use the following prompt for evaluating LVLMs on the scene and document understanding tasks: \textit{``$\langle \texttt{Image} \rangle$ Provide a detailed description of the given image.''}.
To evaluate medical (radiology) report generation, we use the following prompt for \modelname{LLaVA-Med}: \textit{``$\langle \texttt{Image} \rangle$ What does the chest X-ray show?''}, whereas \modelname{CvT2DistilGPT2} and \modelname{MAIRA-2} do not require any text prompt for generating reports.

\paragraph{Prompts for Error Annotation.} We include our prompts for LLM-assisted error annotation of the scene understanding, medical report generation, and document understanding tasks in Table~\ref{tab:label_prompt}.

\begin{table*}[ht]
    \centering
    \caption{Prompts for error annotation.}
    \label{tab:label_prompt}
\resizebox{0.8\linewidth}{!}{
\begin{tcolorbox}[breakable,title=Error annotation prompt for scene understanding]
        \textbf{System} \textit{``You are an expert annotator tasked with evaluating statements generated by a vision-language model (VLM).\
Given an image and a claim, your task is to verify the factuality of the claim based on how well it aligns with the provided image.\
You should focus only on significant or material correctness, ignoring minor differences or non-essential details, especially in spatial relationships or specific object types.}

\textit{The errors are categorized as follows:
1. **Object Identification (Object)**: The claim involves hallucinated or wrongly identified objects. Ignore minor distinctions between similar objects (e.g., slotted spoon vs regular spoon) unless it fundamentally changes the meaning of the claim.
2. **Attribute Accuracy (Attribute)**: The claim involves incorrect attributes (e.g., color, size, shape). Only flag attributes if they are critical to the understanding of the claim.
3. **Spatial Relations (Spatial)**: The claim involves incorrect spatial relationships between objects. Only flag spatial errors if they significantly change the scene (e.g., "above the water" vs. "in the water" can be ignored unless the context requires precision).
4. **Interaction/Action Accuracy (Interaction)**: The claim involves incorrect or hallucinated action or interaction.
5. **Quantitative Information (Quantitative)**: The claim involves incorrect numeric details (e.g., wrong object count).
}

\textit{For each claim, generate a JSON object with four fields:
- "reasoning": a brief explanation of why the claim is correct or incorrect.
- "label": a boolean value (True or False) where True means the claim is factually correct, and False means it is incorrect.
- "error\textunderscore type": a list of error types (e.g., ["Object", "Attribute"]) if the claim contains errors, or an empty list if the claim is fully correct.
}

\textit{
Example:
Given an image of two orange cats, and the following list of claims:
1. This image features several cute cats.
2. There are a total number of three cats.
3. One cat is orange, the others are black.
4. There is also a dog behind the cats.
}

\textit{
Return:
[
    \{"reasoning": "The claim is general and no significant error can be found.", "label": true, "error\textunderscore type": []\},
    \{"reasoning": "There are two cats in the image, not three.", "label": false, "error\textunderscore type": ["Quantitative"]\},
    \{"reasoning": "One cat is orange, but the other is not black.", "label": false, "error\textunderscore type": ["Quantitative", "Attribute"]\},
    \{"reasoning": "There is no dog in the image.", "label": false, "error\textunderscore type": ["Object"]\}
]
''}

\textbf{User} \textit{``List of claims:
\{claims\}
For each claim, return a JSON object with "reasoning", "label" (true or false),
"error\textunderscore type" (might contain multiple types from ["Object", "Attribute", "Spatial", "Interaction", "Quantitative"]).
''}
        
    \end{tcolorbox}
    }

\resizebox{0.8\linewidth}{!}{
\begin{tcolorbox}[breakable,title=Error annotation prompt for medical report generation]
        \textbf{System} \textit{``You are an experienced radiologist tasked with evaluating statements generated by a Medical AI model.\
Given a chest x-ray image, a ground truth report generated by expert human radiologist, and a claim generated by the AI model, your task is to verify the factuality of the claim based on how well it aligns with the provided ground truth report. IMPORTANT: A claim should be deemed correct only if it is directly entailed by the ground truth report.}

\textit{The errors are categorized as follows:
1. **Conflicting Error (Conflicting)**: The claim directly contradicts information provided in the ground truth report.
2. **Implausible Error (Implausible)**: The claim does not directly conflict with or align with the ground truth report, and is implausible within the given context.
3. **Plausible Error (Plausible)**: The claim does not directly conflict with or align with the ground truth report, but remains plausible within the given context.
}

\textit{For each claim, generate a JSON object with four fields:
- "reasoning": a brief explanation of why the claim is correct or incorrect.
- "label": a boolean value (True or False) where True means the claim is factually correct, and False means it is incorrect.
- "error\textunderscore type": a list of error types (e.g., ["Conflicting", "Plausible"]) if the claim contains errors, or an empty list if the claim is fully correct.
}

\textit{
Example:
Given a chest x-ray report and the following list of claims:
1. There is no evidence of lung consolidation.
2. The heart size is mildly enlarged.
3. There are signs of a pleural effusion.
}

\textit{
Return:
[
    {"reasoning": "The ground truth report confirms no lung consolidation.", "label": true, "error\textunderscore type": []},
    {"reasoning": "The ground truth report describes the heart size as normal.", "label": false, "error\textunderscore type": ["Conflicting"]},
    {"reasoning": "The ground truth report does not mention a pleural effusion, but it is a plausible interpretation in some cases.", "label": false, "error\textunderscore type": ["Plausible"]}
]
''}

\textbf{User} \textit{``List of claims:
\{claims\}
For each claim, return a JSON object with "reasoning", "label" (true or false),
"error\textunderscore type" (might contain multiple types from ["Conflicting", "Implausible", "Plausible"]).
''}
        
    \end{tcolorbox}
    }

\resizebox{0.8\linewidth}{!}{
    \begin{tcolorbox}[breakable,title=Error annotation prompt for document understanding]
        \textbf{System} \textit{``You are an expert annotator tasked with evaluating statements generated by a Document AI model.\
Given a ground truth document (image and text) and a claim, your task is to verify the factuality of the claim based on how well it aligns with the provided document.
}

\textit{The errors are categorized as follows:
1. **Field Misinterpretation (Field)**: Incorrectly identify important fields such as mistaking "Invoice Date" for "Due Date", "Subtotal" for "Total Amount", or misrecognize non-existing field.
2. **Numerical and Quantitative Errors (Numerical)**: Incorrect amounts, totals, or quantity values, as well as calculation discrepancies (e.g., subtotal, tax, and total relationship).
3. **Date Error (Date)**: Misrecongizing date or misinterpreting date formats.
4. **Item Error (Item)**: Misrecongizing item or item details, or falsely identifying non-existing item.
5. **Other Errors (Other)**: Other errors such as misspell or misrecognize character, layout and alignment issues.
}

\textit{For each claim, generate a JSON object with four fields:
- "reasoning": a brief explanation of why the claim is correct or incorrect.
- "label": a boolean value (True or False) where True means the claim is factually correct, and False means it is incorrect.
- "error\textunderscore type": a list of error types (e.g., ["Numerical", "Item"]) if the claim contains errors, or an empty list if the claim is fully correct.
}

\textit{
Example:
Given an invoice of buying a Chopping Board at a shop named \"Walmart\", and the following list of claims:
1. This image is a printed invoice.
2. The merchant name is \"Wallmart\".
3. The items listed on the receipt include two Chopping Board, and a Knife.
}

\textit{
Return:
[
    {"reasoning": "The image shows a printed invoice.", "label": true, "error\textunderscore type": []},
    {"reasoning": "The merchant name is spelled incorrectly.", "label": false, "error\textunderscore type": ["Other"]},
    {"reasoning": "Only one Chopping Board, and no Knife purchased.", "label": false, "error\textunderscore type": ["Numerical", "Item"]},
]
''}

\textbf{User} \textit{``List of claims:
\{claims\}
For each claim, return a JSON object with "reasoning", "label" (true or false),\
"error\textunderscore type" (might contain multiple types from ["Field", "Numerical", "Date", "Item", "Other"]).
''}
        
    \end{tcolorbox}}
\end{table*}

\paragraph{Decomposition and Merge Operators.}
We implement the decomposition and merging operations by prompting the language model part of the LVLM. 
For rare cases where the LVLM does not support or cannot correctly implement the decomposing operation (e.g., dedicated models such as \modelname{MAIRA-2} and \modelname{CvT2DistilGPT2}), we use \modelname{GPT-4o-mini} as the substitute model to implement the decomposition and combination operations.
The prompt for decomposing claims is \textit{``Breakdown the above statement into a set of indepedent and self-contained claims. Each claim should be a short sentence. Output only a numbered list of claims.''}.
The prompt for merging claims is \textit{``Merge the above claims about an image into a cohesive statement. Reuse the words from the original claims and do not generate any new claims.''}.

\section{Additional Results}

\subsection{Annotation Reliability}
\paragraph{Human Raters.} We used \modelname{GPT-4o} for assisting with annotating error types. To verify its annotation quality, we randomly select a subset of $50$ images and $1,182$ associating claims generated by \modelname{LLaVA-1.5} on the scene understanding task, and recruit two human annotators to generate independent error type annotations.
The measured averaged Intraclass Correlation Coefficient (ICC) between \modelname{GPT-4o} and human annotations is $0.85$, with the $95\%$ CI being $[0.82, 0.87]$. This result confirmed that the annotations show high inter-rater reliability (by convention, any ICC value above 0.75 is considered to be good reliability~\cite{koo2016guideline}).

\paragraph{LVLM Raters.} We measured the averaged ICC between GPT-4o and Gemini-1.5-pro on the scene understanding task to be $0.81$, with the $95\%$ CI being $[0.77, 0.85]$, which shows high inter-rater reliability.

\subsection{LVLM Output Distribution}

In Fig.~\ref{fig:vlm_out_dist}, we compare the quality of vanilla LVLM outputs (i.e., raw responses without filtering any claim) by visualizing the distribution of the number of claims and loss per response.
An ideal LVLM should be expressive (output more claims) while maintaining a low risk of hallucination (yield low loss values).
On the scene understanding task, \modelname{GPT-4o-mini} clearly outperforms other models but is still prone to errors. On the medical report generation task, \modelname{CvT2DistilGPT2} and \modelname{MAIRA-2} both outperform \modelname{LLaVA-Med} by a large margin, but still have very high loss values in most responses.
Similarly, on the document understanding tasks, both model show a long tail in loss distribution.
These observations necessitate the adoption of error control methods with statistical guarantees.

\begin{figure}[t]
     \centering
     \begin{subfigure}[b]{0.7\linewidth}
         \centering
        \includegraphics[width=0.9\linewidth]{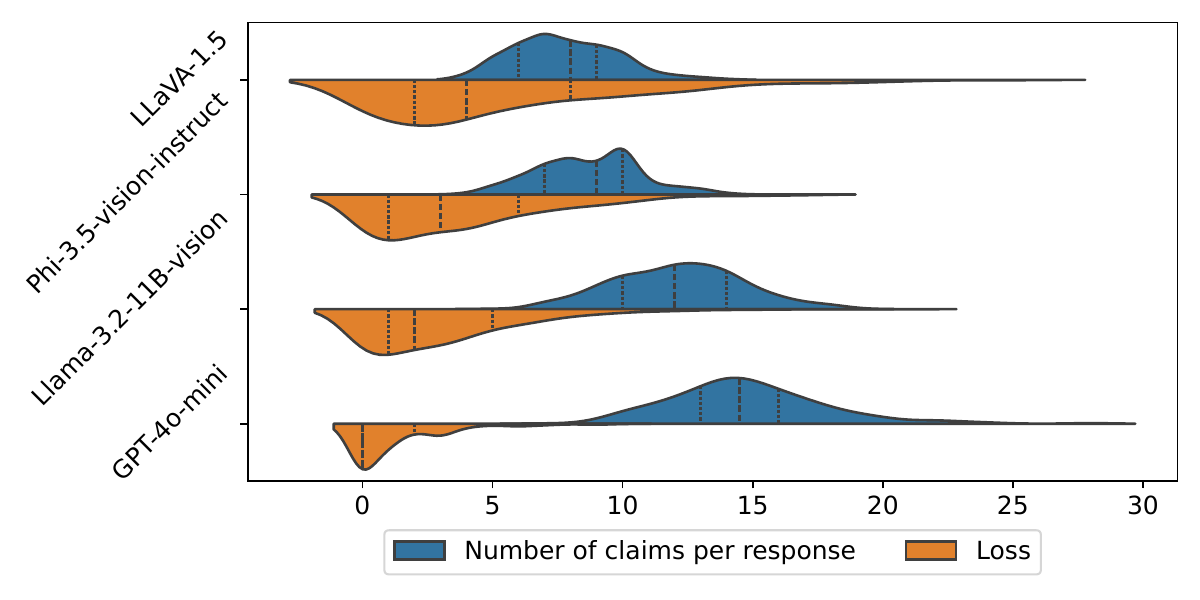}
         \caption{Scene understanding}
         \label{subfig:pope_vlm}
     \end{subfigure}

     \begin{subfigure}[b]{0.7\linewidth}
         \centering
        \includegraphics[width=0.9\linewidth]{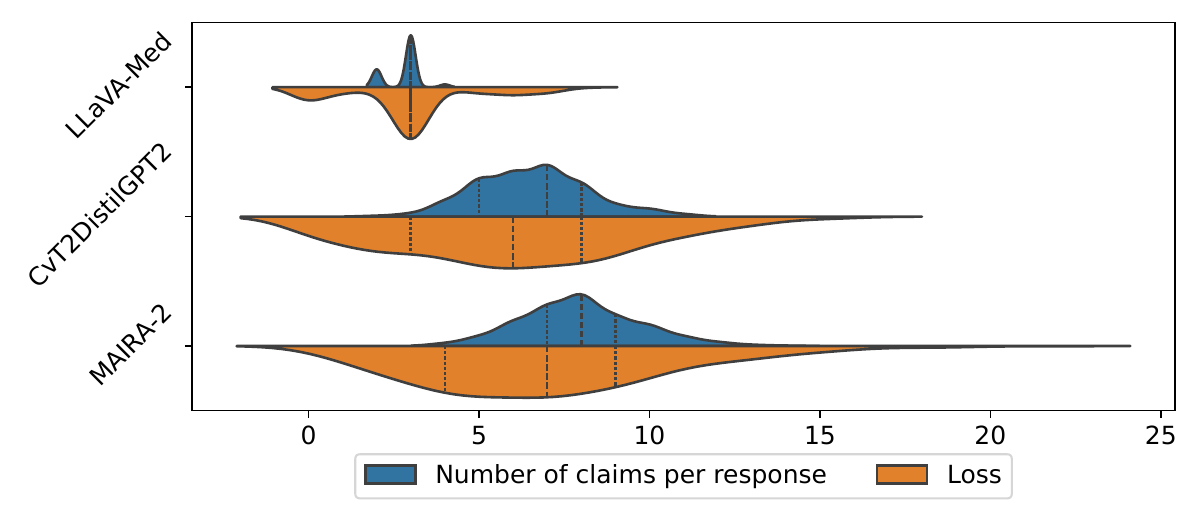}
         \caption{Medical report generation}
         \label{subfig:mimic_vlm}
     \end{subfigure}

     \begin{subfigure}[b]{0.7\linewidth}
         \centering
        \includegraphics[width=0.9\linewidth]{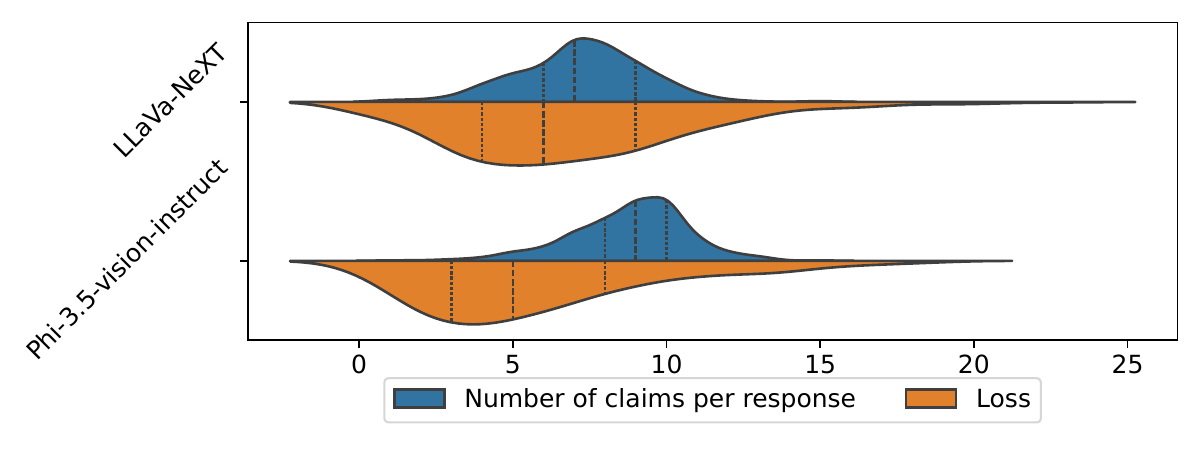}
         \caption{Document understanding}
         \label{subfig:sroie_vlm}
         
     \end{subfigure}
        \caption{Comparison of the quality of raw LVLM responses.}
        \label{fig:vlm_out_dist}
\end{figure}

\subsection{Comparison to Heuristic-based Mitigation}
Heuristic-based mitigation, such as Woodpecker~\cite{yin2023woodpecker}, relies on a series of external models, including BLIP-2, GroundingDINO, and proprietary models such as GPT, to reduce object hallucination, and thus it is not readily applicable to the specialized domains considered in our paper (i.e., medicine and finance).
Even if Woodpecker is applicable, unlike our method which is driven by a confidence score, Woodpecker is driven by the matching of textual claims to objects extracted from the image. In this sense, when Woodpecker fails to match an object, it filters out the claim. As such, our method offers a continuous confidence score that is tunable, whereas Woodpecker only offers a binary match/no match strategy.
We conducted additional experiments on the scene understanding task by randomly selecting $100$ images and measuring Woodpecker's claim filtering efficiency (in terms of TPR) and final response accuracy, with results shown in Table~\ref{tab:woodpecker}.
We observe that Woodpecker suffers from low TPR in claim filtering and low final response accuracy, partially due to GroundingDINO's high FPR in open-set object detection.

\begin{table}[h]
    \centering
    \caption{Comparision to heuristic-based mitigation.}
    \label{tab:woodpecker}
    \vspace{2mm}
    \resizebox{0.62\linewidth}{!}{
    \begin{tabular}{c|c|c}
    \toprule
                        & \textbf{Claim Filtering Efficiency (TPR) $\uparrow$} & \textbf{Response Accuracy $\uparrow$} \\ \midrule
    \textbf{Woodpecker} & 59.1\%                                    & 41\%                       \\ 
    \textbf{\textsc{ConfLVLM}}   & \textbf{95.3\%}                           & \textbf{90\%}              \\ \bottomrule
    \end{tabular}
    }

\end{table}

\subsection{Omitted Results from Main Paper}
Here we provide the omitted results from main paper for medical report generation and document understanding.

\paragraph{Medical Report Generation.}
Fig.~\ref{fig:mimic_coverage} compares the empirical and desired coverage of \modelname{LlaVa-Med}, \modelname{CvT2DistilGPT2}, and \modelname{MAIRA-2} on the medical report generation task.
The same conclusion is drawn as in the general scene understanding setting: \methodName~
achieves the desired level of coverage across all types of scoring functions, whereas Vanilla LVLM (i.e., responses without any filtration) produces significantly low coverage. 
Fig.~\ref{fig:mimic_cont} shows the average ratio of filtered claims  across a range of desired coverage, whereas Fig.~\ref{fig:mimic_abstention} presents the abstention rate as a function of desired coverage.

\begin{figure}[t]
    \centering
\includegraphics[width=0.86\linewidth]{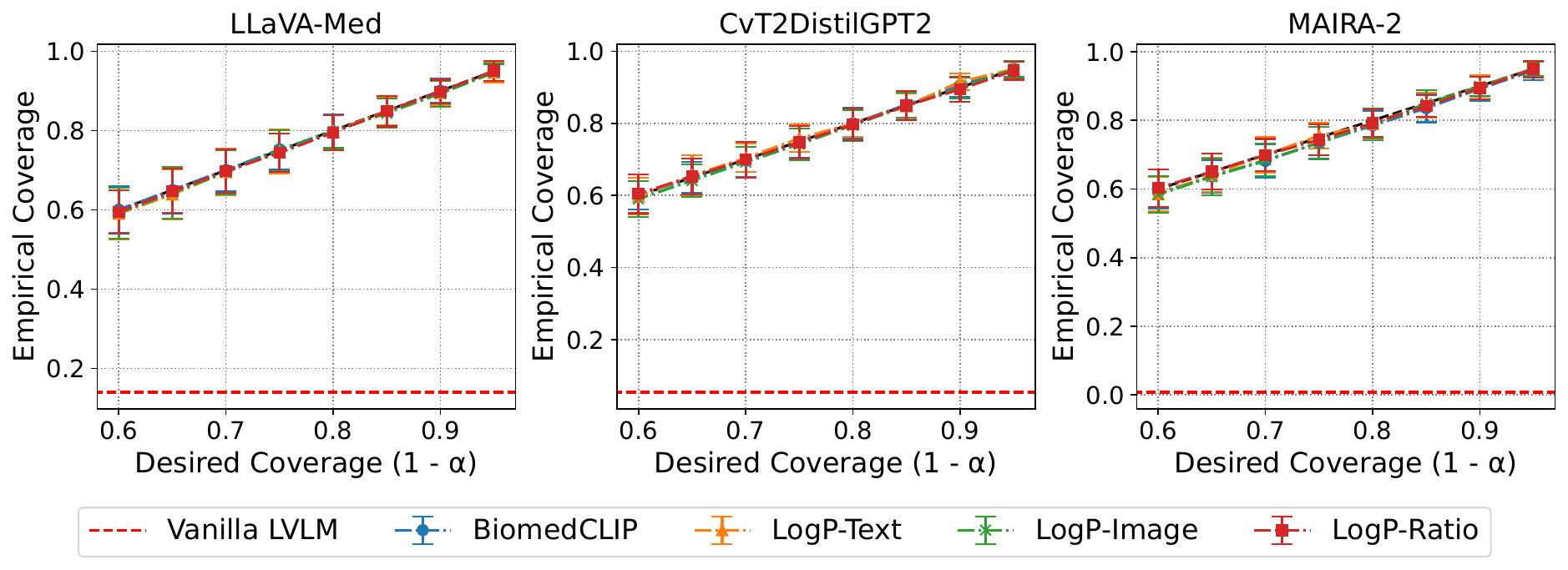}
    \caption{Alignment between the empirical and desired (theoretical) coverage on the \textit{medical report generation} task (with $\lambda=0$). Vanilla LVLM (red dashed line) refers to the base setting where the LVLM-generated responses are returned to users without using \methodName.}
    \label{fig:mimic_coverage}
\end{figure}

\begin{figure}[t]
    \centering
\includegraphics[width=0.86\linewidth]{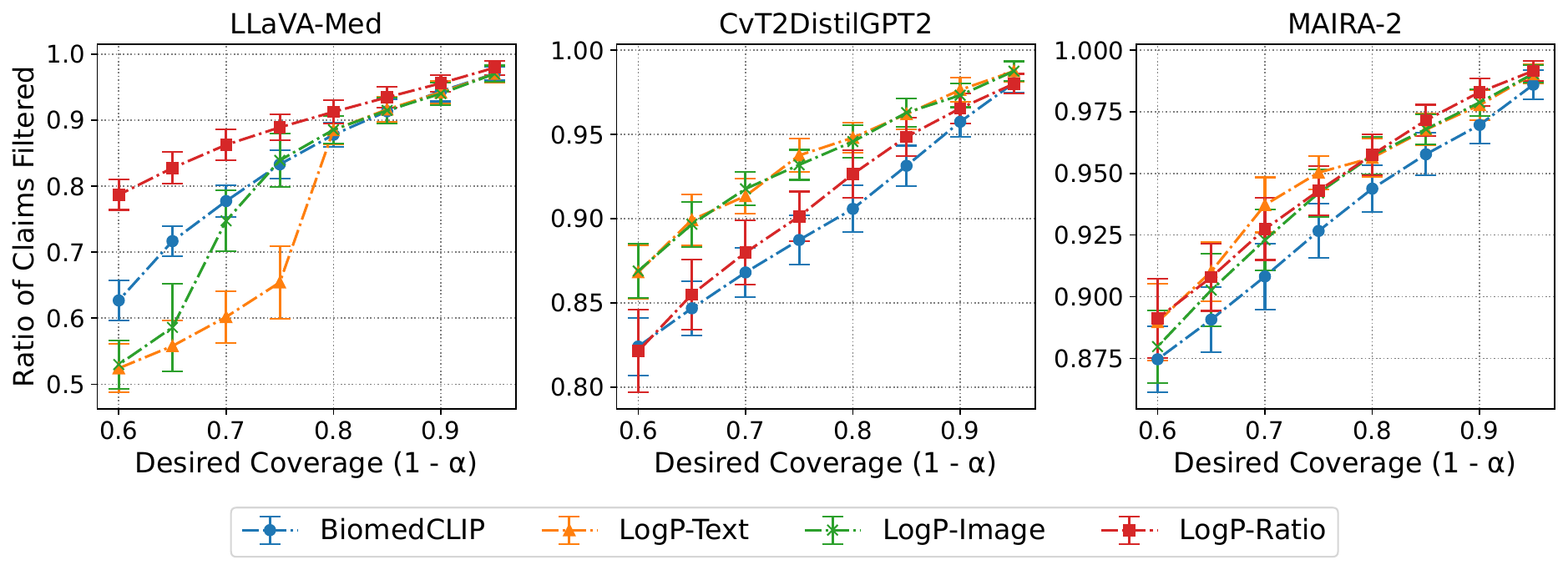}
    \caption{Average ratio of claims filtered with varying coverage using different scoring functions on the \textit{medical report generation} task (with $\lambda=0$).}
    \label{fig:mimic_cont}
\end{figure}

\begin{figure}[t]
    \centering
\includegraphics[width=0.86\linewidth]{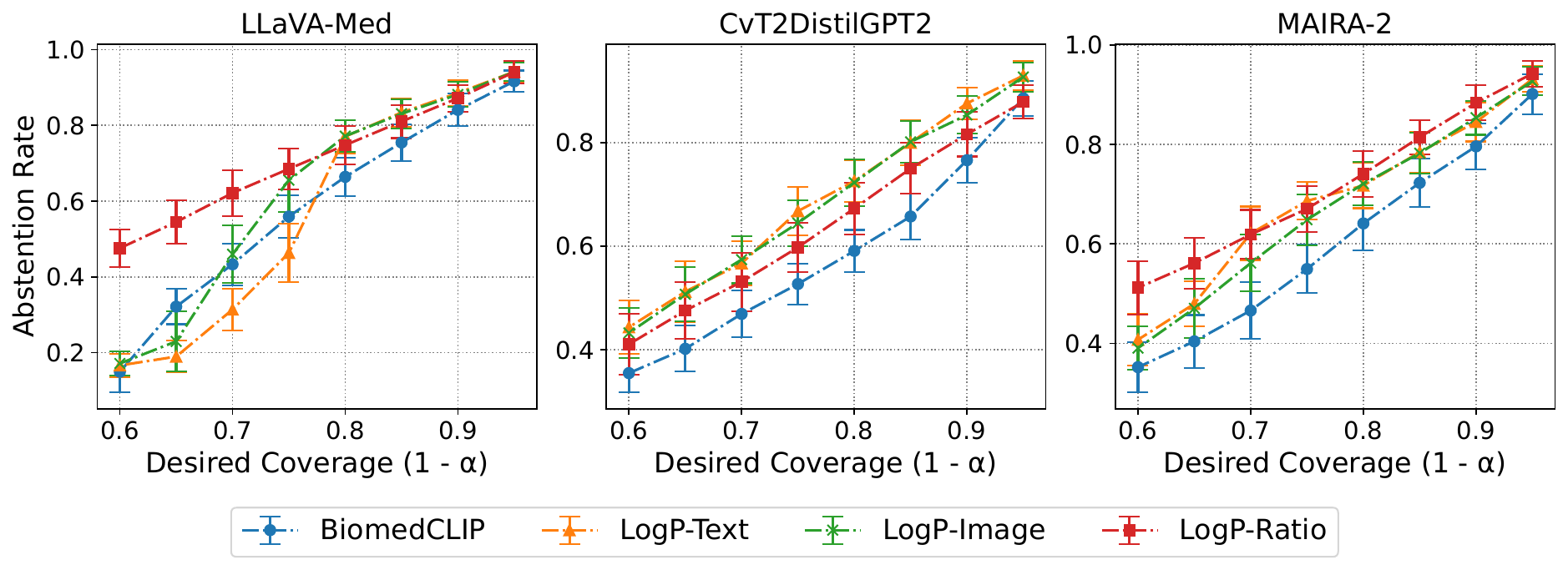}
    \caption{Abstention rate with varying coverage using different scoring functions on the \textit{medical report generation} task (with $\lambda=0$).}
    \label{fig:mimic_abstention}
\end{figure}

\begin{figure}[t]
    \centering
\includegraphics[width=0.8\linewidth]{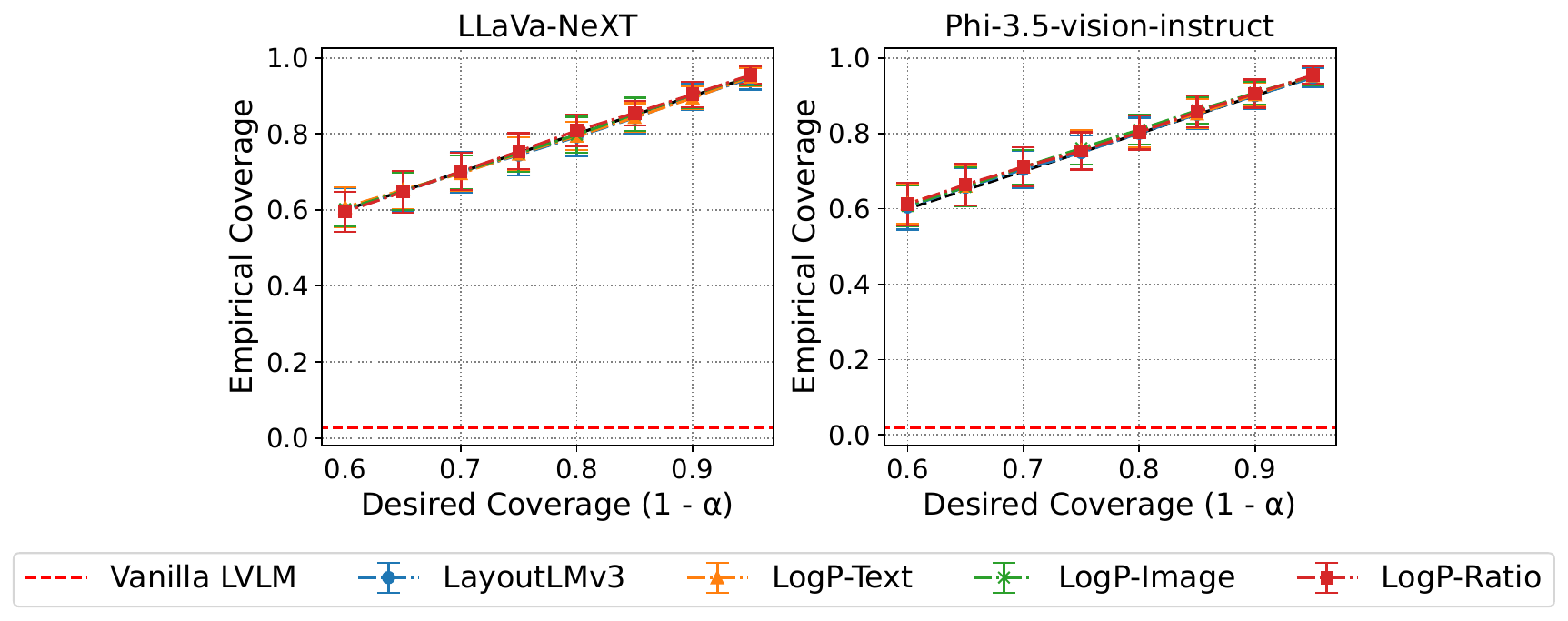}
    \caption{Alignment between the empirical and desired (theoretical) coverage on the \textit{document understanding} task (with $\lambda=0$). Vanilla LVLM (red dashed line) refers to the base setting where the LVLM-generated responses are returned to users without using \methodName.}
    \label{fig:sroie_coverage}
\end{figure}

\paragraph{Document Understanding.}
Fig.~\ref{fig:sroie_coverage} shows the alignment between the empirical and desired coverage of \modelname{LLaVA-Next} and \modelname{Phi-3.5-vision-instruct} using \methodName~on the document understanding task. The same conclusions regarding model coverage are reached as in the other two image understanding settings.
Fig.~\ref{fig:sroie_cont} shows the average ratio of filtered claims  across a range of desired coverage, whereas
Fig.~\ref{fig:sroie_abstention} presents the abstention rate as a function of desired coverage.
In Fig.~\ref{fig:sroie_err}, we compare \modelname{LLaVa-NeXT}'s response with different error tolerances and a fixed error rate.

\begin{figure}[t]
    \centering
\includegraphics[width=0.7\linewidth]{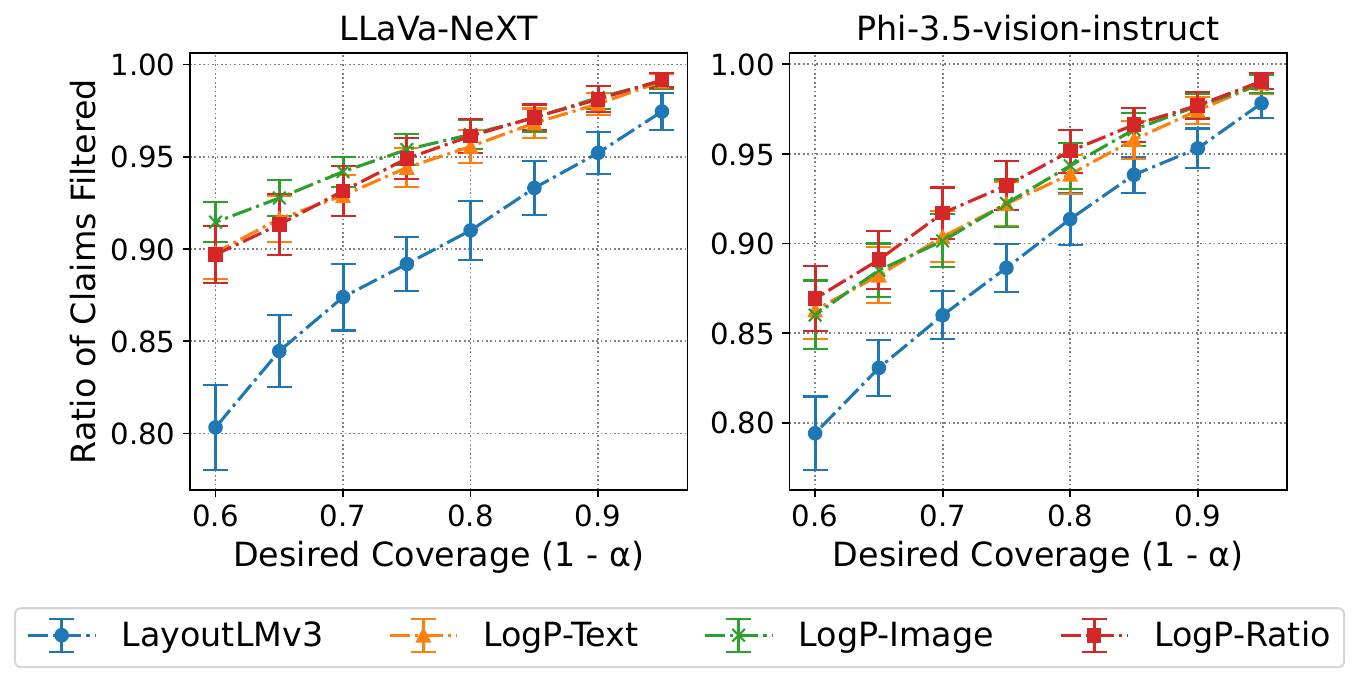}
    \caption{Average ratio of claims filtered with varying coverage using different scoring functions on the \textit{document understanding} task (with $\lambda=0$).}
    \label{fig:sroie_cont}
\end{figure}

\begin{figure}[t]
    \centering
\includegraphics[width=0.7\linewidth]{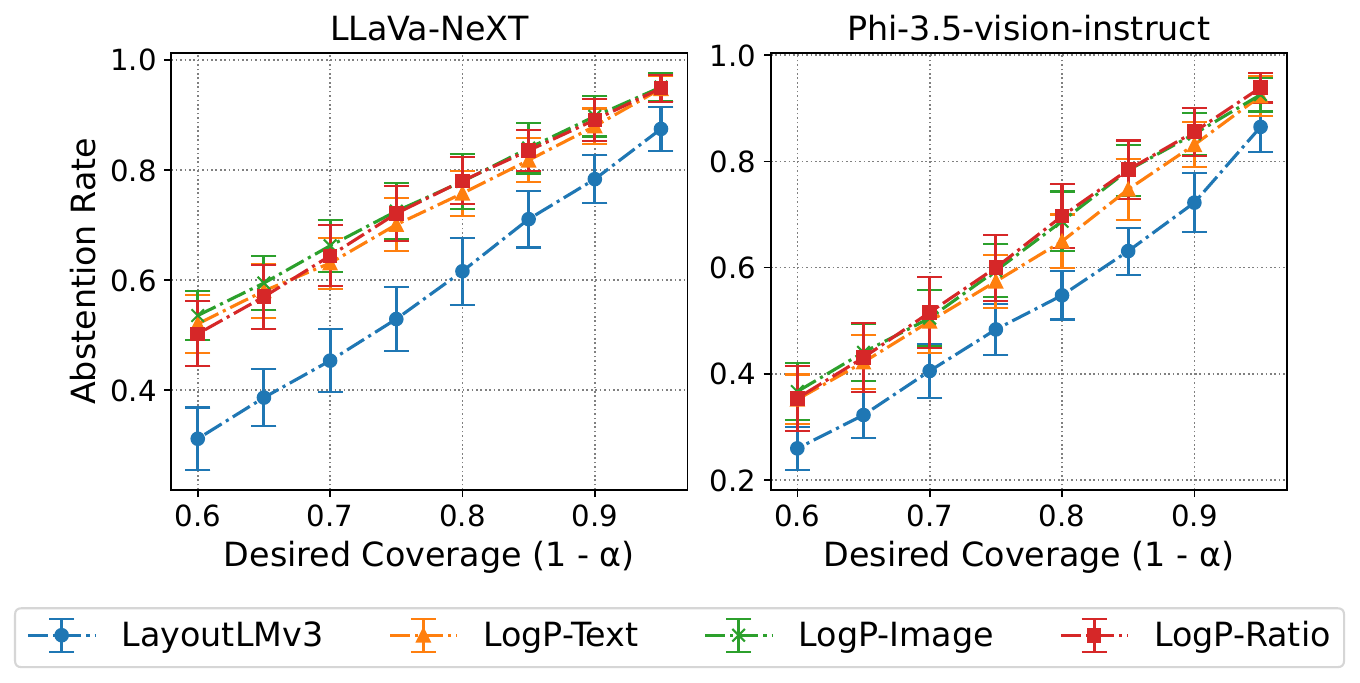}
    \caption{Abstention rate with varying coverage using different scoring functions on the \textit{document understanding} task (with $\lambda=0$).}
    \label{fig:sroie_abstention}
\end{figure}

\begin{figure}[t]
    \centering
    \includegraphics[width=0.7\linewidth]{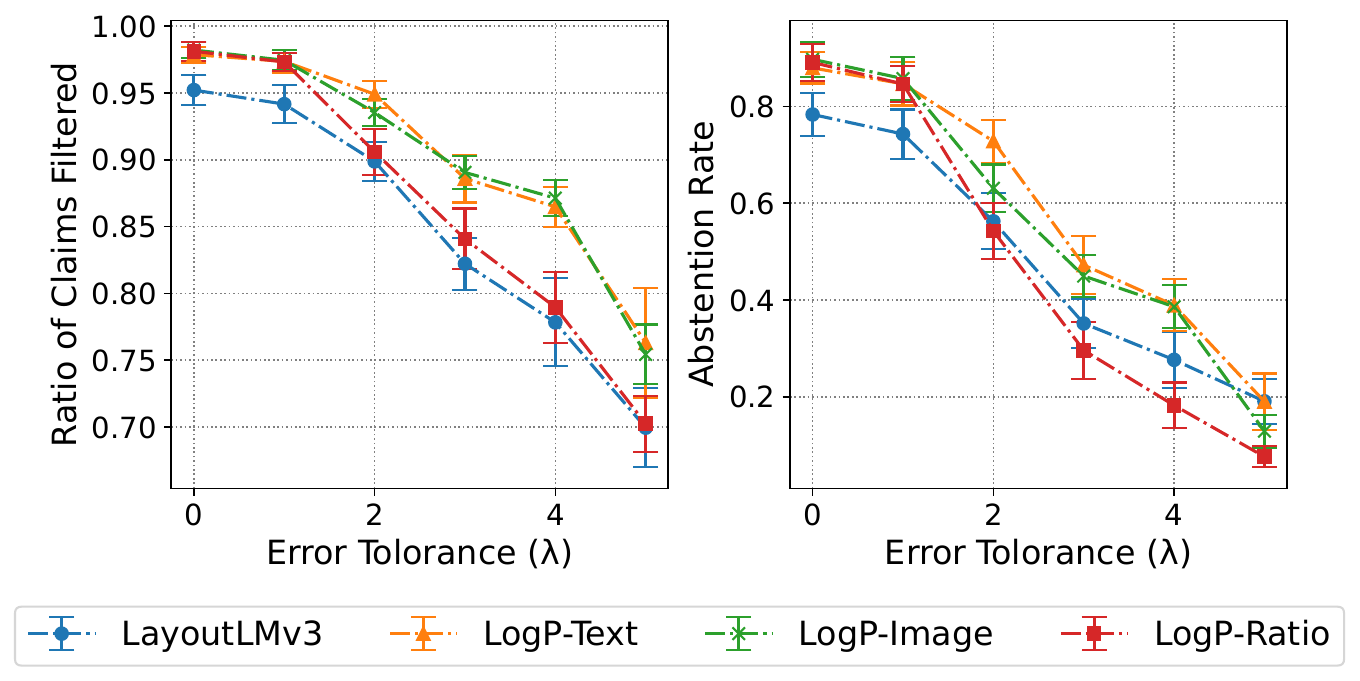}
    \caption{Comparison of \modelname{LLaVa-NeXT}'s response with different error tolerances ($\lambda$) while fixing $\alpha=0.1$ on the \textit{document understanding} task.}
    \label{fig:sroie_err}
\end{figure}

\subsection{Response-level and Claim-level Results}
In Table~\ref{tab:pope_res_extend}, we show the response-level (rate of responses containing at least one error and the average loss per response) and claim-level (TPR, FNR, and F1 for detecting erroneous responses) results under various $\alpha, \lambda$ configurations on the scene understanding task. Results are averaged over $50$ random data splits.

\begin{table}[t]
    \centering
    \resizebox{0.78\linewidth}{!}{
    \begin{tabular}{l|c|cc|ccc}
\hline
                                          &                                             & \multicolumn{2}{c|}{\textbf{Response-level}}                    & \multicolumn{3}{c}{\textbf{Claim-level}}                                                         \\ \cline{3-7} 
\multirow{-2}{*}{\textbf{LVLM}}           & \multirow{-2}{*}{\textbf{Configuration}}    & \textbf{Error Rate}            & \textbf{Average Loss}          & \textbf{TPR}                   & \textbf{FNR}                   & \textbf{F1}                    \\ \hline
                                          & Vanilla                                     & 0.8782                         & 5.5028                         & 0.0                            & 1.0                            & 0.0                            \\
                                          & \cellcolor[HTML]{EFEFEF}$\alpha=0.1, \lambda=0$ & \cellcolor[HTML]{EFEFEF}0.102 & \cellcolor[HTML]{EFEFEF}0.206 & \cellcolor[HTML]{EFEFEF}0.953 & \cellcolor[HTML]{EFEFEF}0.047 & \cellcolor[HTML]{EFEFEF}0.504 \\
                                          & \cellcolor[HTML]{EFEFEF}$\alpha=0.1, \lambda=2$ & \cellcolor[HTML]{EFEFEF}0.212 & \cellcolor[HTML]{EFEFEF}0.529 & \cellcolor[HTML]{EFEFEF}0.886 & \cellcolor[HTML]{EFEFEF}0.114 & \cellcolor[HTML]{EFEFEF}0.507 \\
                                          & \cellcolor[HTML]{EFEFEF}$\alpha=0.3, \lambda=0$ & \cellcolor[HTML]{EFEFEF}0.291 & \cellcolor[HTML]{EFEFEF}0.780 & \cellcolor[HTML]{EFEFEF}0.836 & \cellcolor[HTML]{EFEFEF}0.164 & \cellcolor[HTML]{EFEFEF}0.503 \\
\multirow{-5}{*}{\modelname{LLaVA-1.5}}               & \cellcolor[HTML]{EFEFEF}$\alpha=0.3, \lambda=2$ & \cellcolor[HTML]{EFEFEF}0.536 & \cellcolor[HTML]{EFEFEF}1.771 & \cellcolor[HTML]{EFEFEF}0.637 & \cellcolor[HTML]{EFEFEF}0.363 & \cellcolor[HTML]{EFEFEF}0.499 \\ \hline
                                          & Vanilla                                     & 0.850                         & 3.895                         & 0.0                            & 1.0                            & 0.0                            \\
                                          & \cellcolor[HTML]{EFEFEF}$\alpha=0.1, \lambda=0$ & \cellcolor[HTML]{EFEFEF}0.094 & \cellcolor[HTML]{EFEFEF}0.147 & \cellcolor[HTML]{EFEFEF}0.945 & \cellcolor[HTML]{EFEFEF}0.055 & \cellcolor[HTML]{EFEFEF}0.401 \\
                                          & \cellcolor[HTML]{EFEFEF}$\alpha=0.1, \lambda=2$ & \cellcolor[HTML]{EFEFEF}0.288 & \cellcolor[HTML]{EFEFEF}0.611 & \cellcolor[HTML]{EFEFEF}0.829 & \cellcolor[HTML]{EFEFEF}0.171 & \cellcolor[HTML]{EFEFEF}0.392 \\
                                          & \cellcolor[HTML]{EFEFEF}$\alpha=0.3, \lambda=0$ & \cellcolor[HTML]{EFEFEF}0.306 & \cellcolor[HTML]{EFEFEF}0.655 & \cellcolor[HTML]{EFEFEF}0.818 & \cellcolor[HTML]{EFEFEF}0.182 & \cellcolor[HTML]{EFEFEF}0.390 \\
\multirow{-5}{*}{\modelname{Phi-3.5-vision-instruct}} & \cellcolor[HTML]{EFEFEF}$\alpha=0.3, \lambda=2$ & \cellcolor[HTML]{EFEFEF}0.648 & \cellcolor[HTML]{EFEFEF}1.911 & \cellcolor[HTML]{EFEFEF}0.480 & \cellcolor[HTML]{EFEFEF}0.520 & \cellcolor[HTML]{EFEFEF}0.345 \\ \hline
                                          & Vanilla                                     & 0.793                         & 3.129                         & 0.0                            & 1.0                            & 0.0                            \\
                                          & \cellcolor[HTML]{EFEFEF}$\alpha=0.1, \lambda=0$ & \cellcolor[HTML]{EFEFEF}0.105 & \cellcolor[HTML]{EFEFEF}0.205 & \cellcolor[HTML]{EFEFEF}0.936 & \cellcolor[HTML]{EFEFEF}0.064 & \cellcolor[HTML]{EFEFEF}0.269 \\
                                          & \cellcolor[HTML]{EFEFEF}$\alpha=0.1, \lambda=2$ & \cellcolor[HTML]{EFEFEF}0.232 & \cellcolor[HTML]{EFEFEF}0.543 & \cellcolor[HTML]{EFEFEF}0.831 & \cellcolor[HTML]{EFEFEF}0.169 & \cellcolor[HTML]{EFEFEF}0.266 \\
                                          & \cellcolor[HTML]{EFEFEF}$\alpha=0.3, \lambda=0$ & \cellcolor[HTML]{EFEFEF}0.306 & \cellcolor[HTML]{EFEFEF}0.714  & \cellcolor[HTML]{EFEFEF}0.772 & \cellcolor[HTML]{EFEFEF}0.228 & \cellcolor[HTML]{EFEFEF}0.264 \\
\multirow{-5}{*}{\modelname{Llama-3.2-11B-vision}}    & \cellcolor[HTML]{EFEFEF}$\alpha=0.3, \lambda=2$ & \cellcolor[HTML]{EFEFEF}0.628 & \cellcolor[HTML]{EFEFEF}1.805 & \cellcolor[HTML]{EFEFEF}0.408 & \cellcolor[HTML]{EFEFEF}0.592 & \cellcolor[HTML]{EFEFEF}0.227 \\ \hline
                                          & Vanilla                                     & 0.493                         & 1.265                         & 0.0                            & 1.0                            & 0.0                            \\
                                          & \cellcolor[HTML]{EFEFEF}$\alpha=0.1, \lambda=0$ & \cellcolor[HTML]{EFEFEF}0.097 & \cellcolor[HTML]{EFEFEF}0.168 & \cellcolor[HTML]{EFEFEF}0.850 & \cellcolor[HTML]{EFEFEF}0.150 & \cellcolor[HTML]{EFEFEF}0.100 \\
                                          & \cellcolor[HTML]{EFEFEF}$\alpha=0.1, \lambda=2$ & \cellcolor[HTML]{EFEFEF}0.285 & \cellcolor[HTML]{EFEFEF}0.552 & \cellcolor[HTML]{EFEFEF}0.544 & \cellcolor[HTML]{EFEFEF}0.456 & \cellcolor[HTML]{EFEFEF}0.099 \\
                                          & \cellcolor[HTML]{EFEFEF}$\alpha=0.3, \lambda=0$ & \cellcolor[HTML]{EFEFEF}0.300 & \cellcolor[HTML]{EFEFEF}0.589 & \cellcolor[HTML]{EFEFEF}0.510 & \cellcolor[HTML]{EFEFEF}0.490 & \cellcolor[HTML]{EFEFEF}0.0988 \\
\multirow{-5}{*}{\modelname{GPT-4o-mini}}             & \cellcolor[HTML]{EFEFEF}$\alpha=0.3, \lambda=2$ & \cellcolor[HTML]{EFEFEF}0.493 & \cellcolor[HTML]{EFEFEF}1.265 & \cellcolor[HTML]{EFEFEF}0.0    & \cellcolor[HTML]{EFEFEF}1.0    & \cellcolor[HTML]{EFEFEF}0.0    \\ \hline
\end{tabular}
    }
    \vspace{2mm}
    \caption{Response- and claim-level results on the \textit{scene understanding} task (averaged over $50$ random splits).}
    \label{tab:pope_res_extend}
\end{table}

\subsection{Empirical Coverage and Utility with $\lambda > 0$}

In addition to the plots in the main paper with $\lambda = 0$, we plot the empirical coverage, ratio of claims filtered, and abstention rate with error tolerance $\lambda = 1,2$ for scene understanding in Fig.~\ref{fig:pope_coverage_lambda}, Fig.~\ref{fig:pope_cont_lambda}, and Fig.~\ref{fig:pope_abstent_lambda},
for medical report generation in Fig.~\ref{fig:mimic_coverage_lambda}, Fig.~\ref{fig:mimic_cont_lambda}, and Fig.~\ref{fig:mimic_abstent_lambda}, and for document understanding in Fig.~\ref{fig:sroie_coverage_lambda}, Fig.~\ref{fig:sroie_cont_lambda}, and Fig.~\ref{fig:sroie_abstent_lambda}, respectively.

\begin{figure}[t]
    \centering
    \begin{subfigure}[b]{\linewidth}
        \centering
        \includegraphics[width=0.6\linewidth]{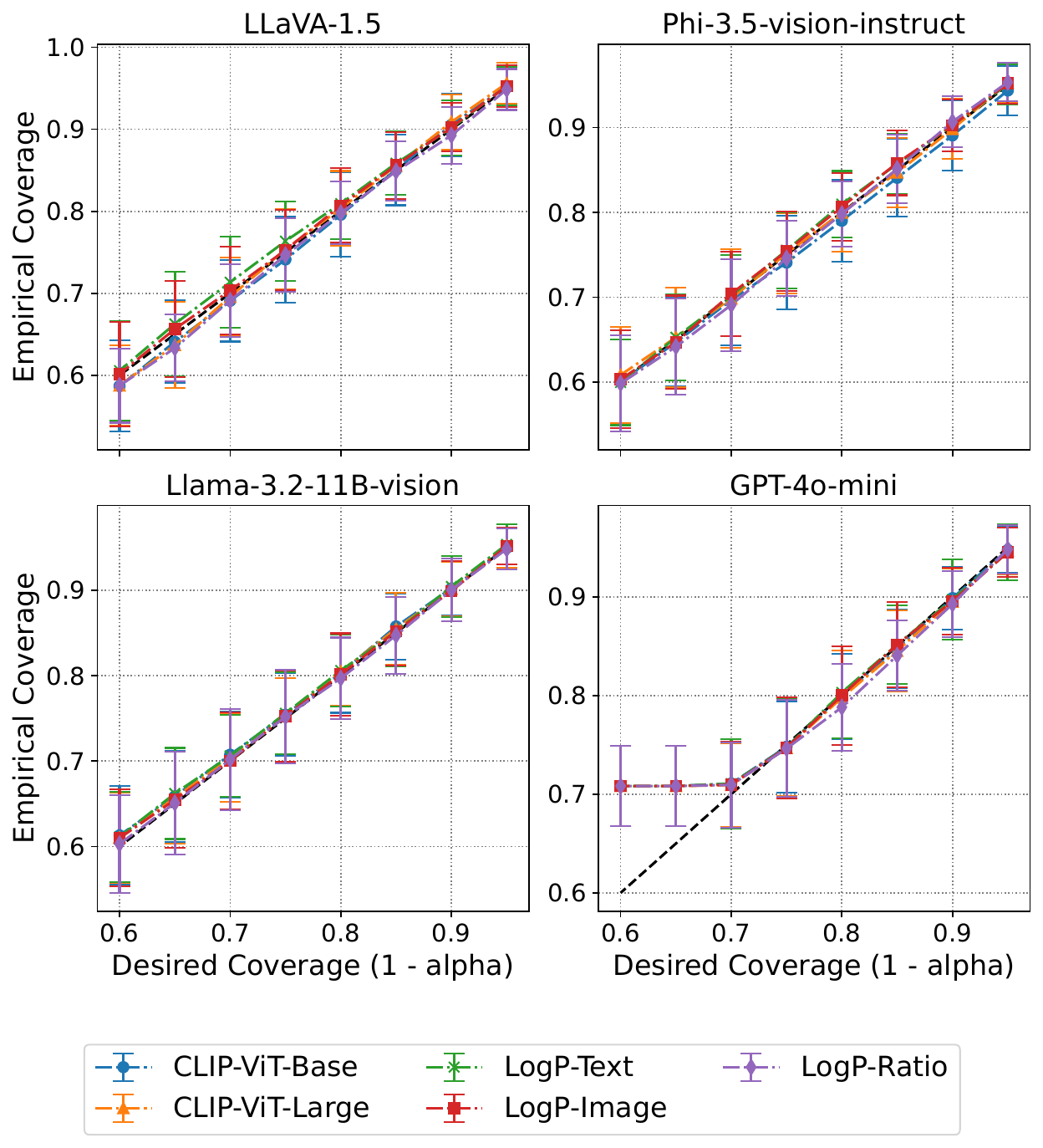}
        \caption{$\lambda=1$}
    \end{subfigure}

    \begin{subfigure}[b]{\linewidth}
        \centering
        \includegraphics[width=0.6\linewidth]{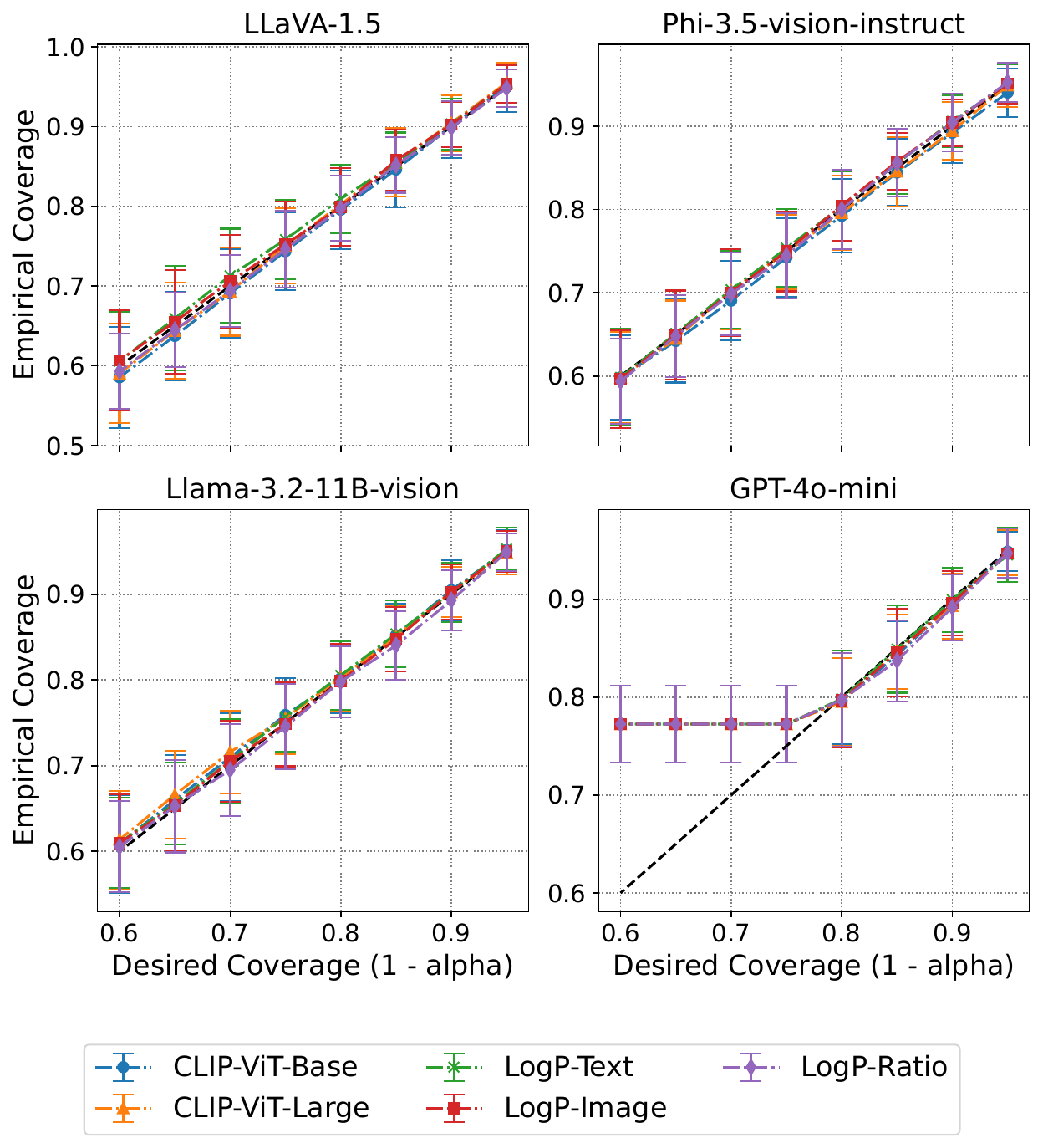}
        \caption{$\lambda=2$}
    \end{subfigure}

    \caption{Comparison of empirical and desired (theoretical) coverage on the scene understanding task with different error tolerances ($\lambda$).}
    \label{fig:pope_coverage_lambda}
\end{figure}

\begin{figure}[t]
    \centering
    \begin{subfigure}[b]{\linewidth}
        \centering
        \includegraphics[width=0.6\linewidth]{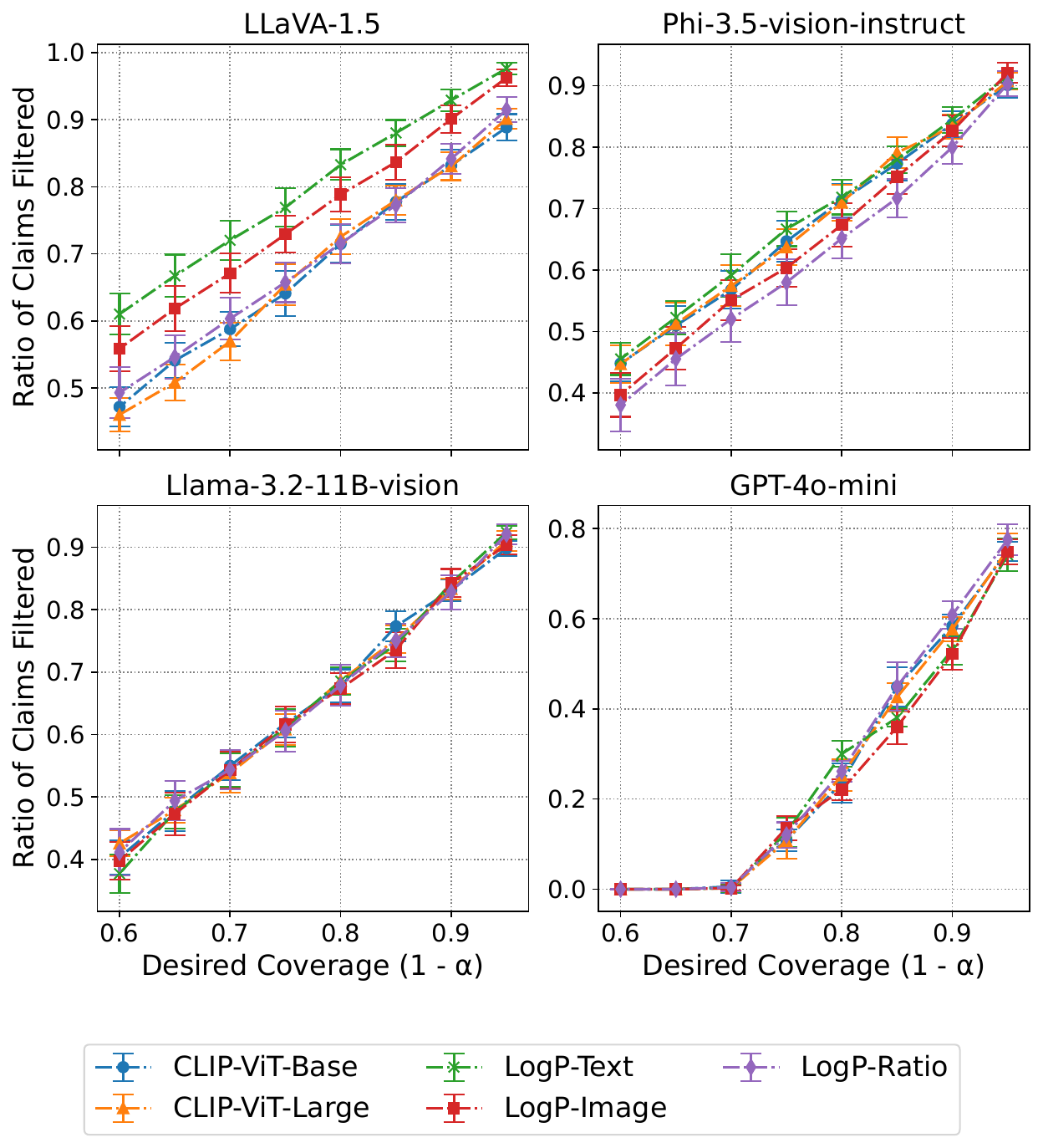}
        \caption{$\lambda=1$}
    \end{subfigure}

    \begin{subfigure}[b]{\linewidth}
        \centering
        \includegraphics[width=0.6\linewidth]{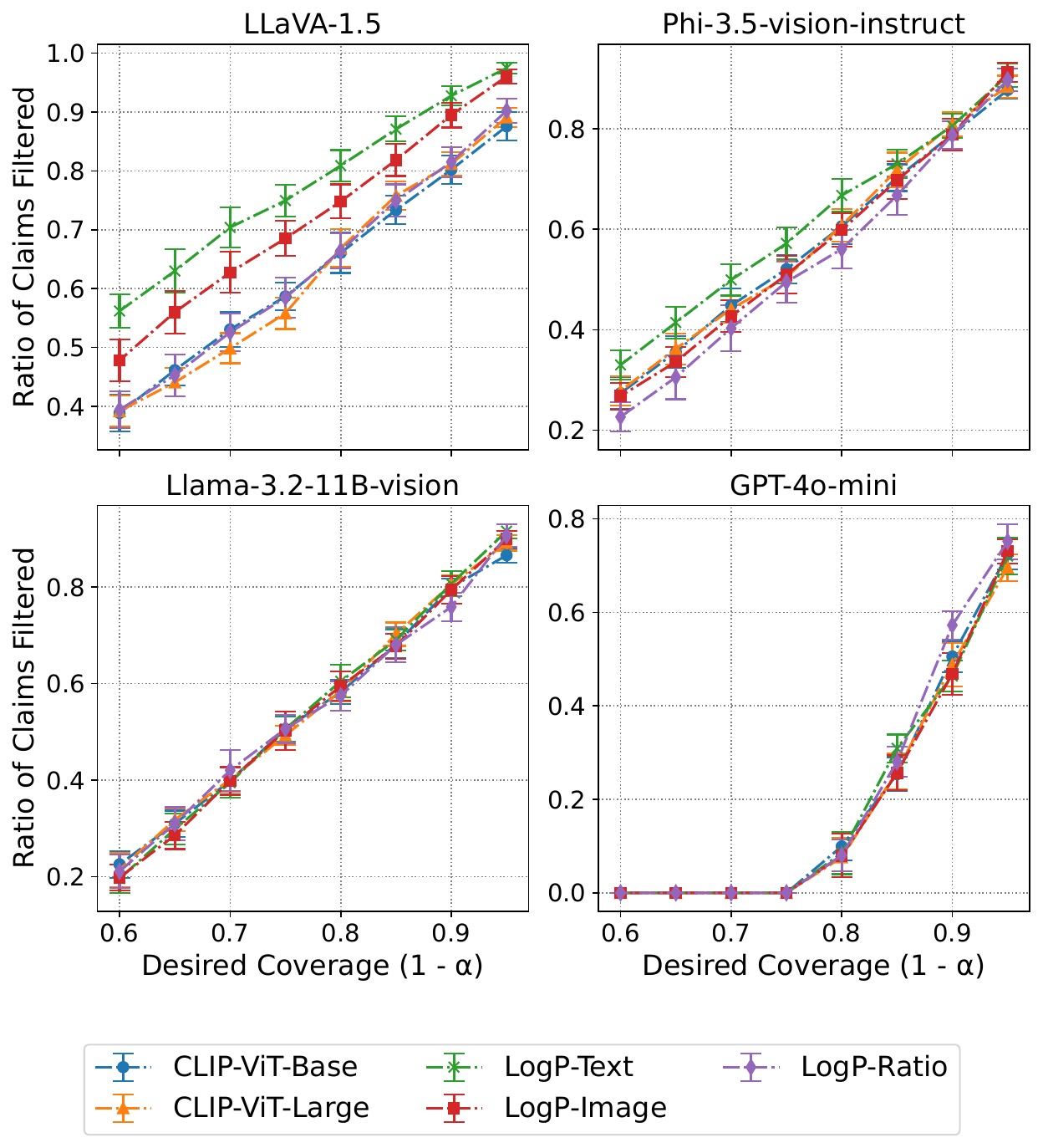}
        \caption{$\lambda=2$}
    \end{subfigure}

    \caption{Average ratio of claims filtered with varying coverage using different scoring functions on the scene understanding task with different error tolerances ($\lambda$).}
    \label{fig:pope_cont_lambda}
\end{figure}

\begin{figure}[t]
    \centering
    \begin{subfigure}[b]{\linewidth}
        \centering
    \includegraphics[width=0.6\linewidth]{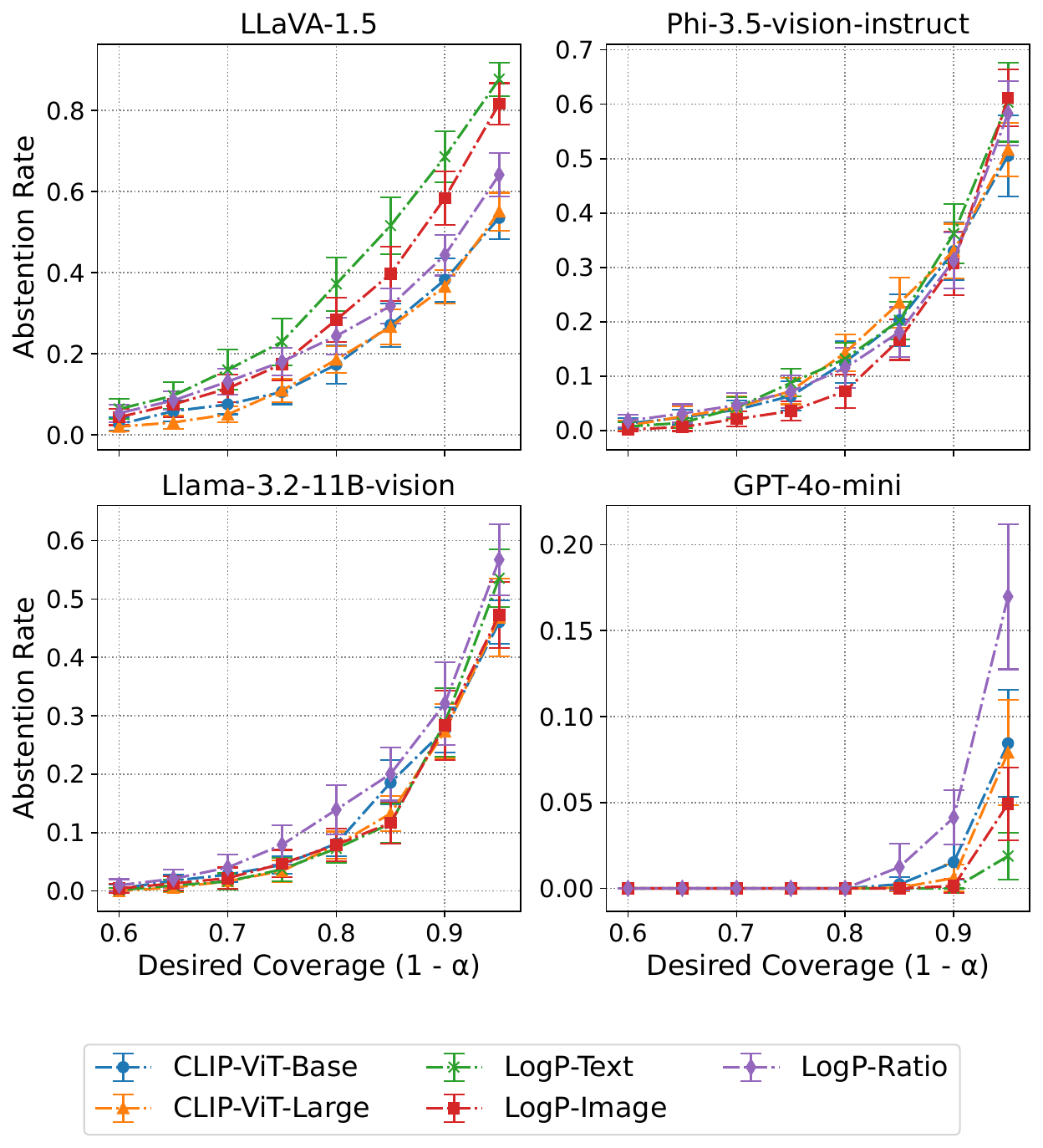}
        \caption{$\lambda=1$}
    \end{subfigure}

    \begin{subfigure}[b]{\linewidth}
        \centering
    \includegraphics[width=0.6\linewidth]{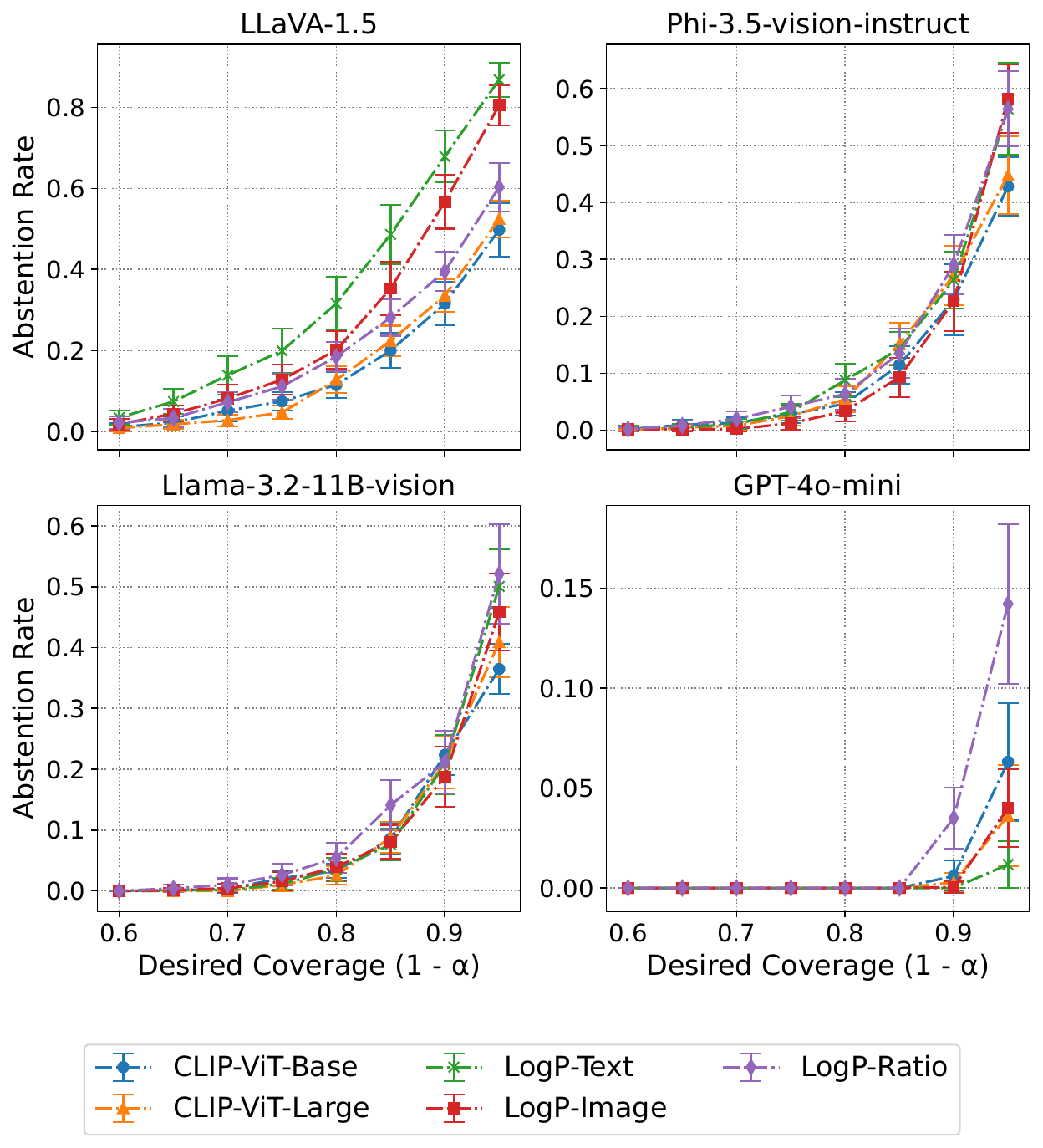}
        \caption{$\lambda=2$}
    \end{subfigure}

    \caption{Abstention rate with varying coverage using different scoring functions on the scene understanding task with different error tolerances ($\lambda$).}
    \label{fig:pope_abstent_lambda}
\end{figure}

\begin{figure}[t]
    \centering
    \begin{subfigure}[b]{0.72\linewidth}
        \centering
    \includegraphics[width=\linewidth]{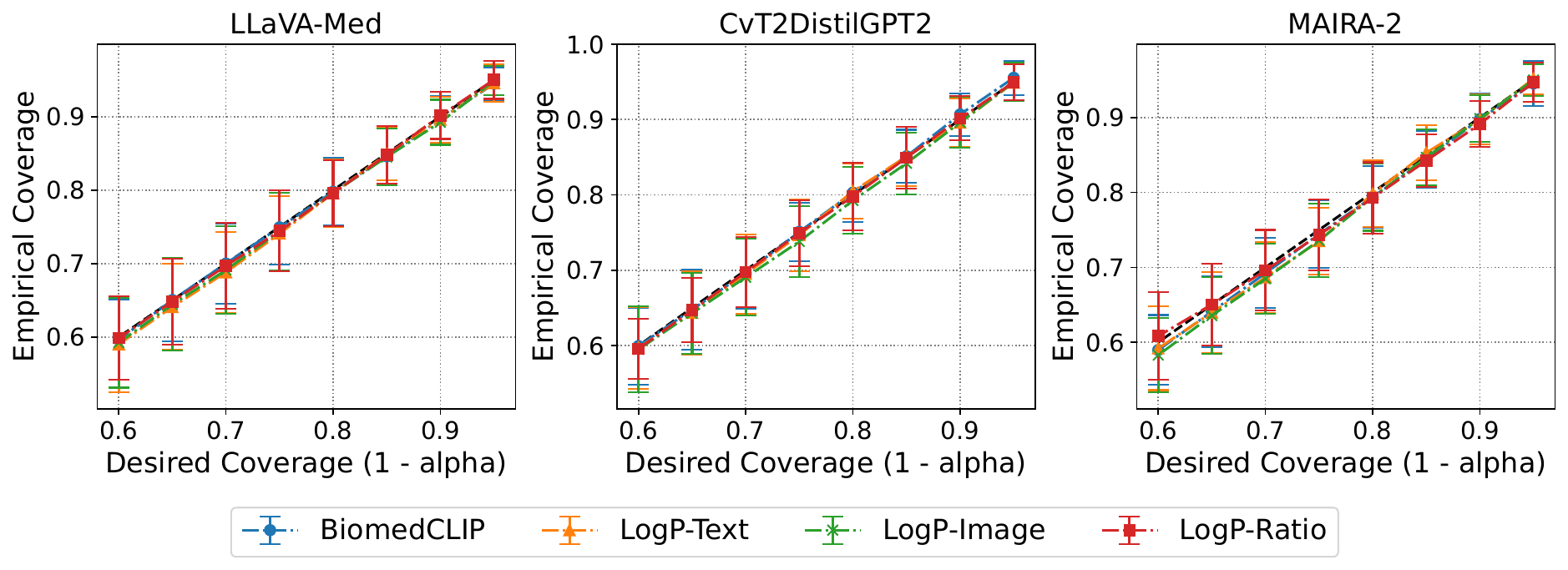}
        \caption{$\lambda=1$}
    \end{subfigure}
    
    \begin{subfigure}[b]{0.72\linewidth}
        \centering
    \includegraphics[width=\linewidth]{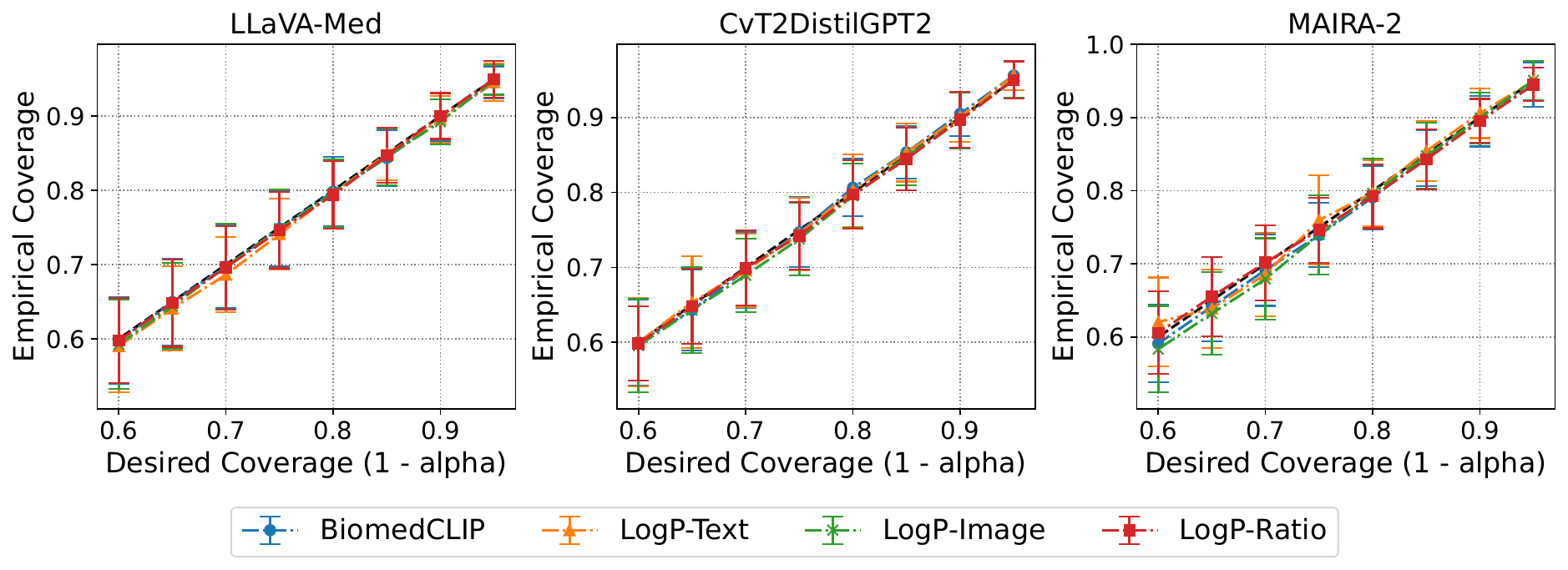}
        \caption{$\lambda=2$}
    \end{subfigure}

    \caption{Comparison of empirical and desired (theoretical) coverage on the medical report generation task with different error tolerances ($\lambda$).}
    \label{fig:mimic_coverage_lambda}
\end{figure}

\begin{figure}[t]
    \centering
    \begin{subfigure}[b]{0.72\linewidth}
        \centering
    \includegraphics[width=\linewidth]{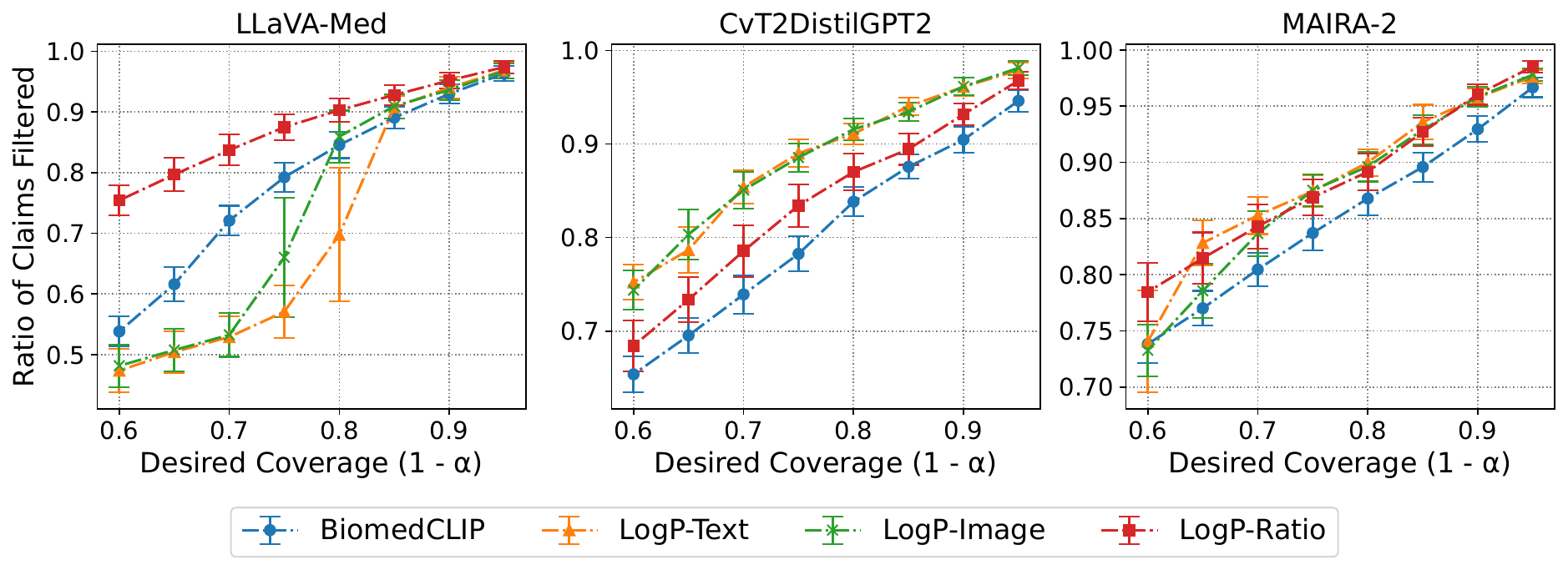}
        \caption{$\lambda=1$}
    \end{subfigure}

    \begin{subfigure}[b]{0.72\linewidth}
        \centering
    \includegraphics[width=\linewidth]{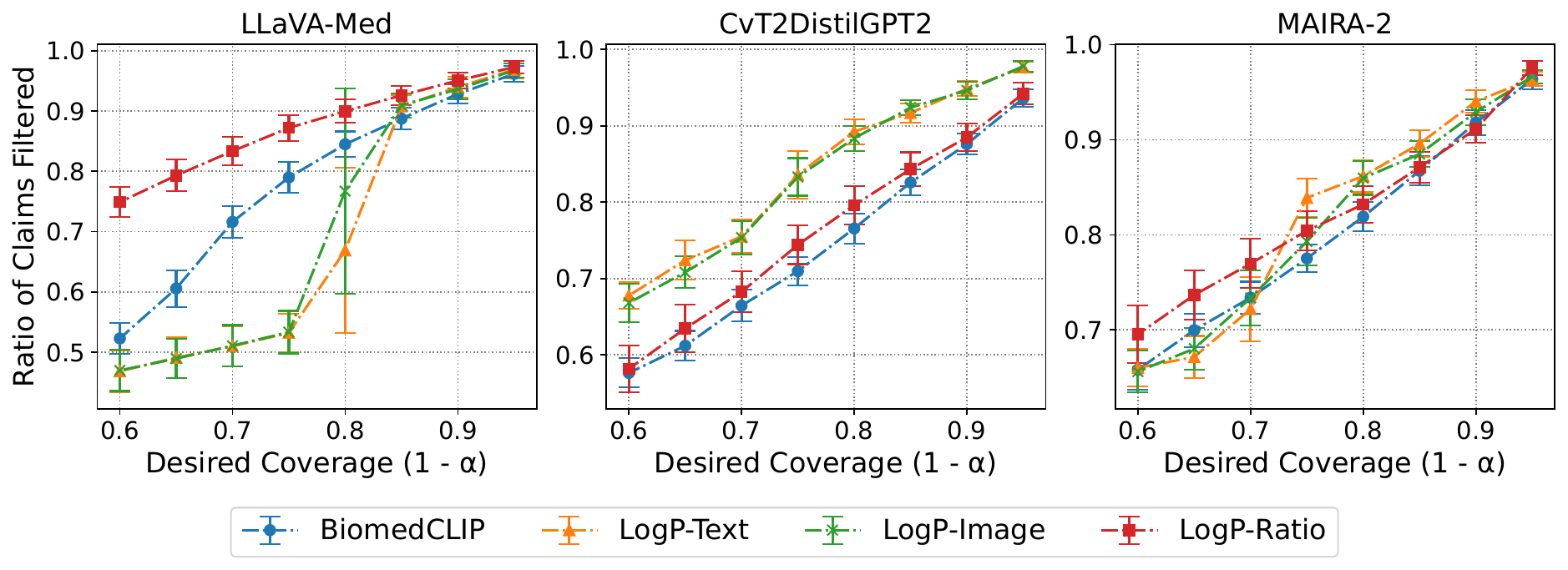}
        \caption{$\lambda=2$}
    \end{subfigure}

    \caption{Average ratio of claims filtered with varying coverage using different scoring functions on the medical report generation task with different error tolerances ($\lambda$).}
    \label{fig:mimic_cont_lambda}
\end{figure}

\begin{figure}[t]
    \centering
    \begin{subfigure}[b]{0.72\linewidth}
        \centering
    \includegraphics[width=\linewidth]{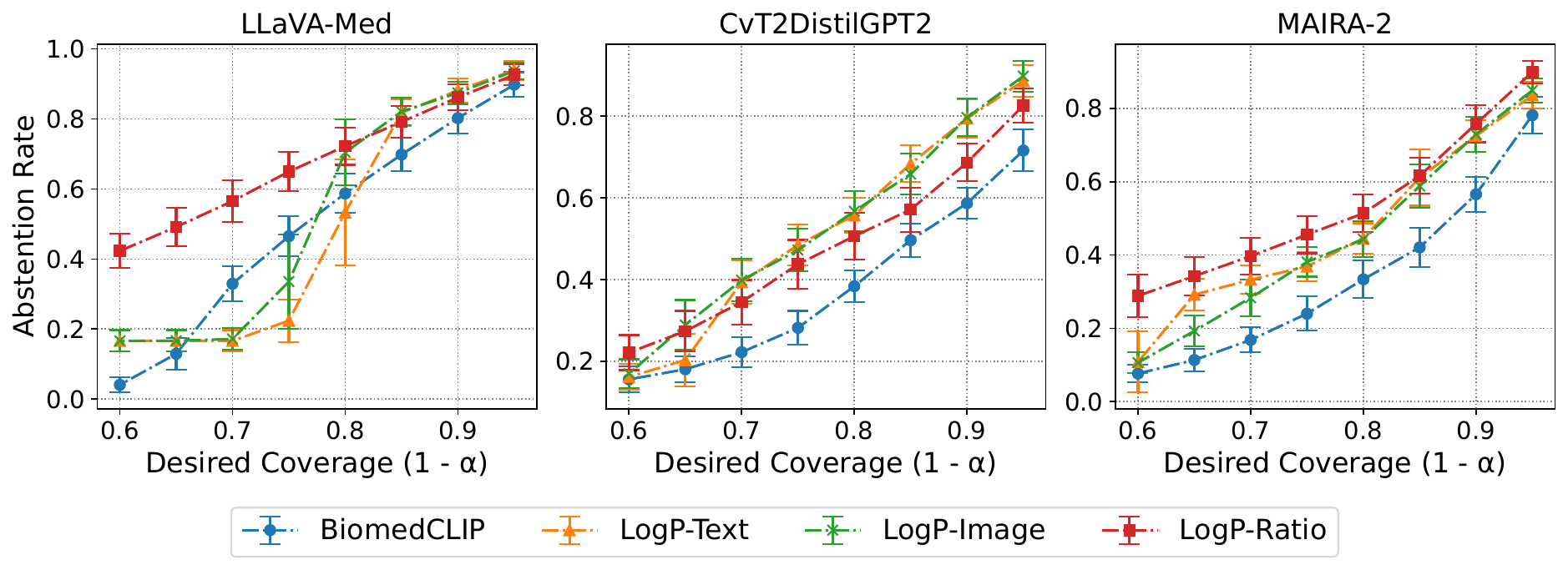}
        \caption{$\lambda=1$}
    \end{subfigure}

    \begin{subfigure}[b]{0.72\linewidth}
        \centering
    \includegraphics[width=\linewidth]{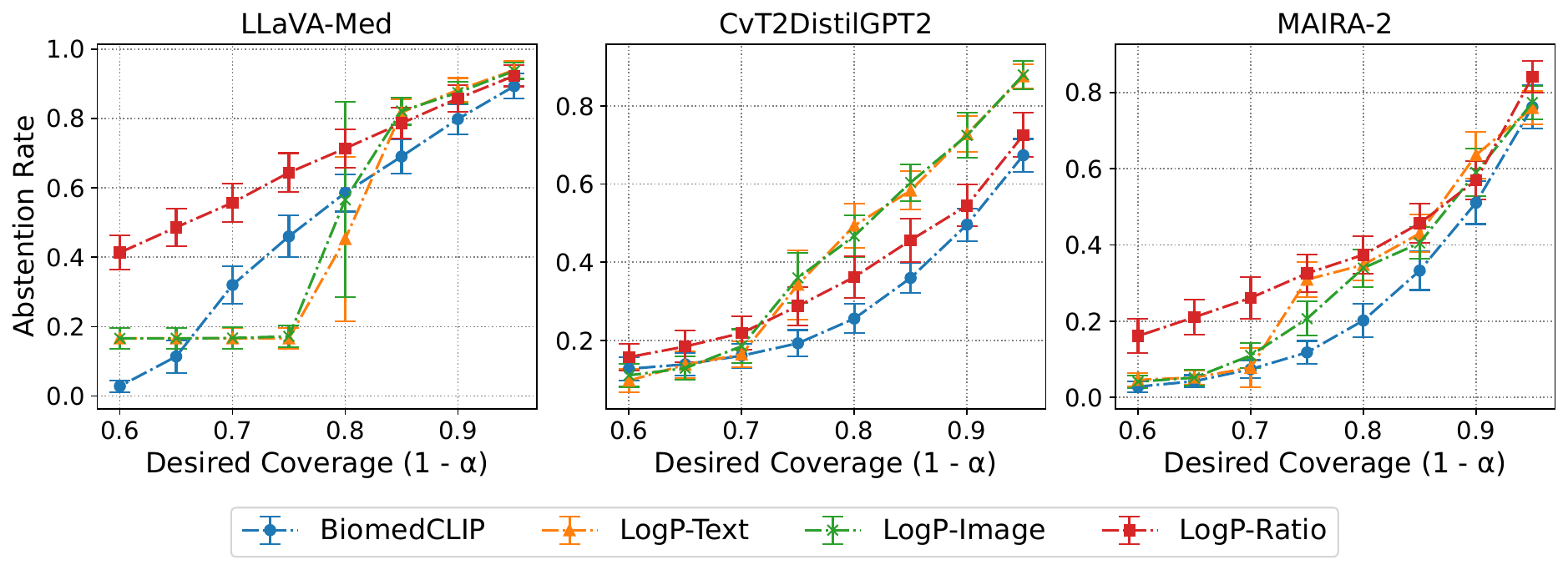}
        \caption{$\lambda=2$}
    \end{subfigure}

    \caption{Abstention rate with varying coverage using different scoring functions on the medical report generation task with different error tolerances ($\lambda$).}
    \label{fig:mimic_abstent_lambda}
\end{figure}

\begin{figure}[t]
    \centering
    \begin{subfigure}[b]{0.7\linewidth}
        \centering
    \includegraphics[width=\linewidth]{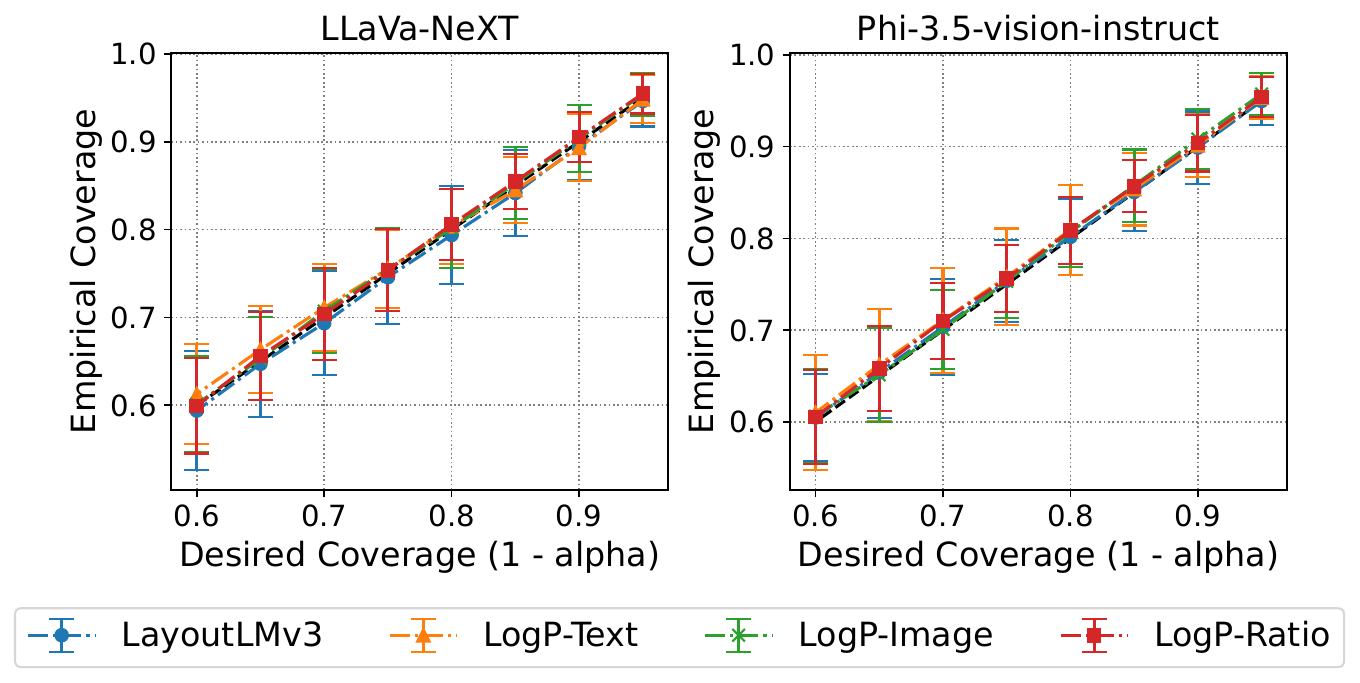}
        \caption{$\lambda=1$}
    \end{subfigure}
    
    \begin{subfigure}[b]{0.7\linewidth}
        \centering
    \includegraphics[width=\linewidth]{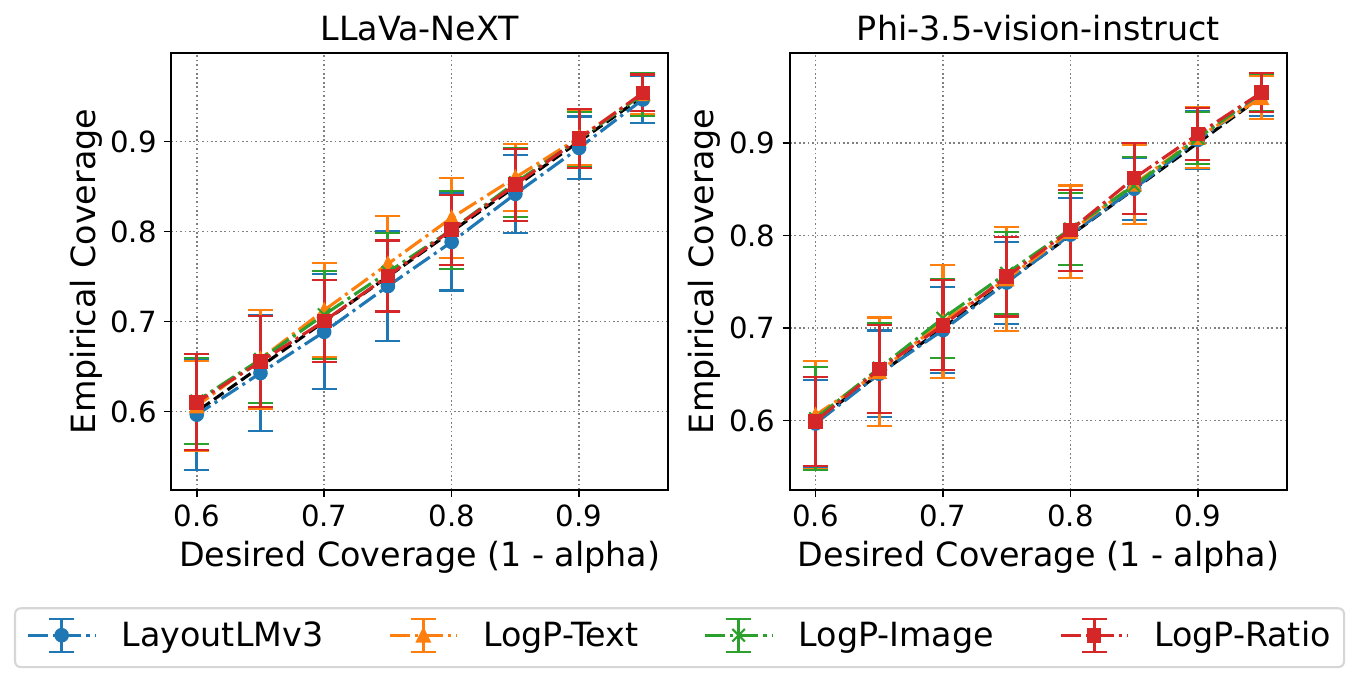}
        \caption{$\lambda=2$}
    \end{subfigure}

    \caption{Comparison of empirical and desired (theoretical) coverage on the document understanding task with different error tolerances ($\lambda$).}
    \label{fig:sroie_coverage_lambda}
\end{figure}

\begin{figure}[t]
    \centering
    \begin{subfigure}[b]{0.7\linewidth}
        \centering
    \includegraphics[width=\linewidth]{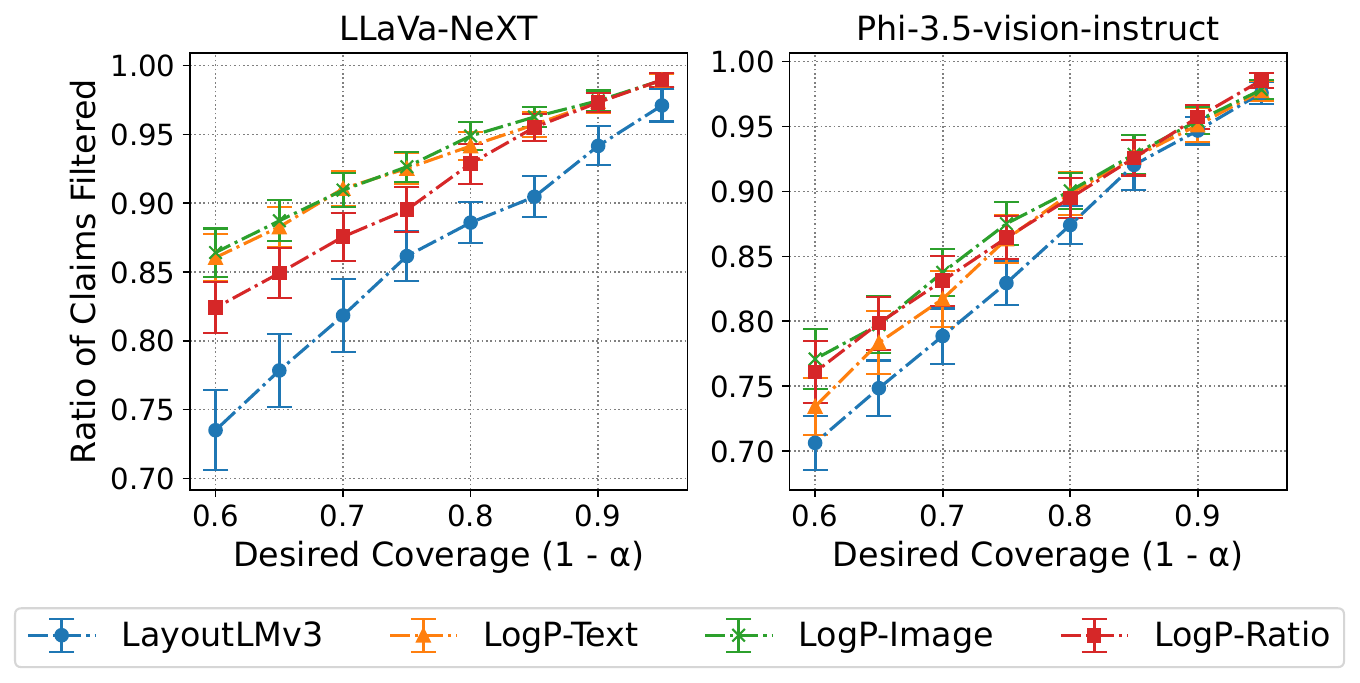}
        \caption{$\lambda=1$}
    \end{subfigure}

    \begin{subfigure}[b]{0.7\linewidth}
        \centering
    \includegraphics[width=\linewidth]{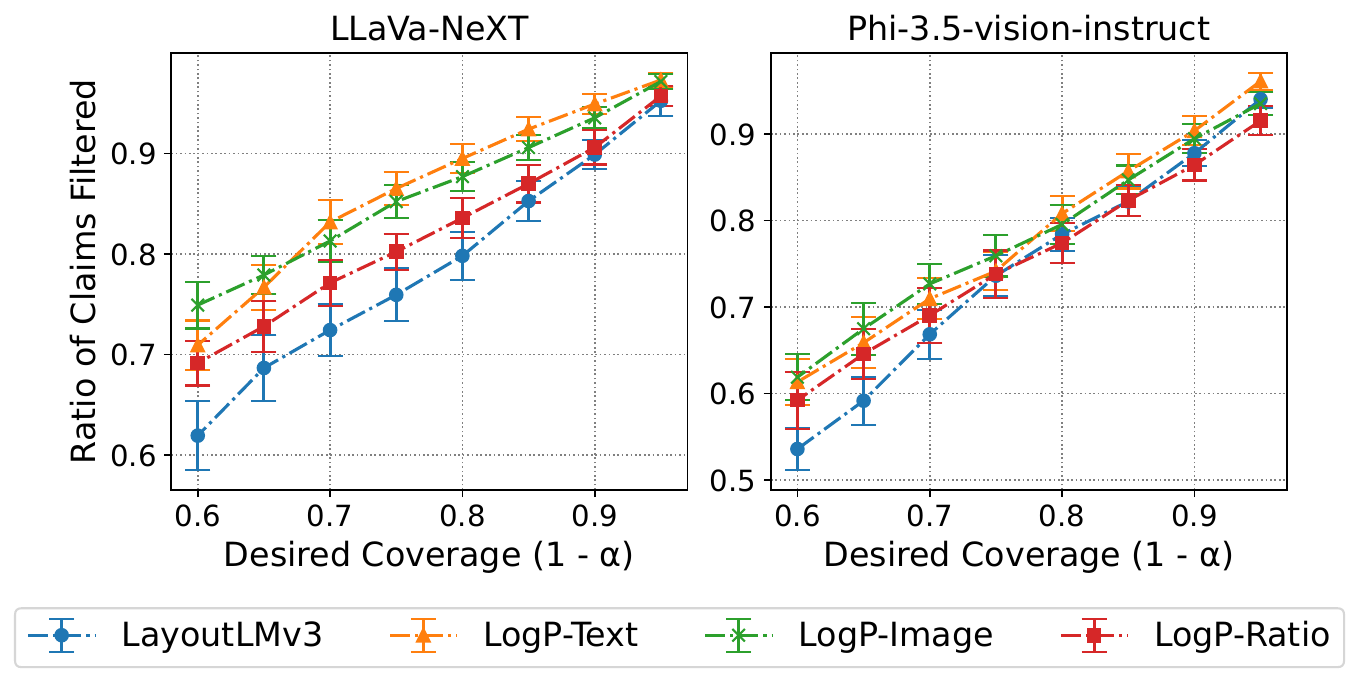}
        \caption{$\lambda=2$}
    \end{subfigure}

    \caption{Average ratio of claims filtered with varying coverage using different scoring functions on the document understanding task with different error tolerances ($\lambda$).}
    \label{fig:sroie_cont_lambda}
\end{figure}

\begin{figure}[t]
    \centering
    \begin{subfigure}[b]{0.7\linewidth}
        \centering
    \includegraphics[width=\linewidth]{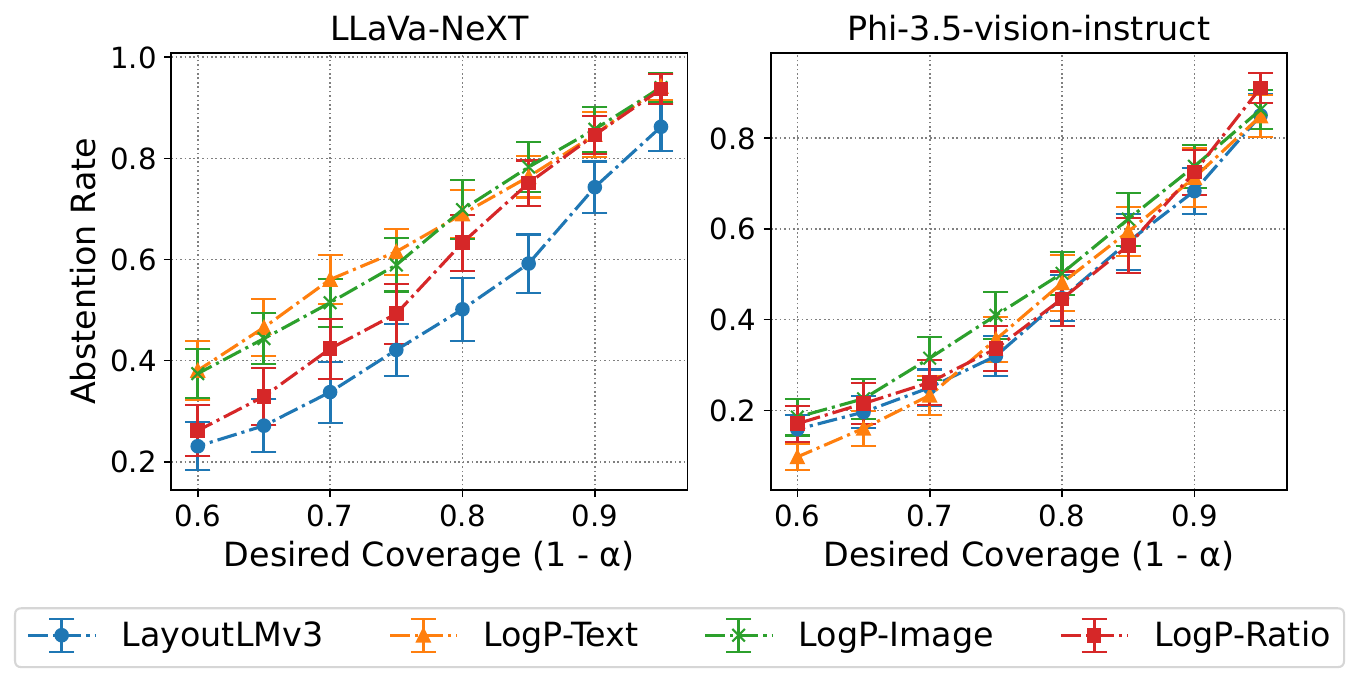}
        \caption{$\lambda=1$}
    \end{subfigure}

    \begin{subfigure}[b]{0.7\linewidth}
        \centering
    \includegraphics[width=\linewidth]{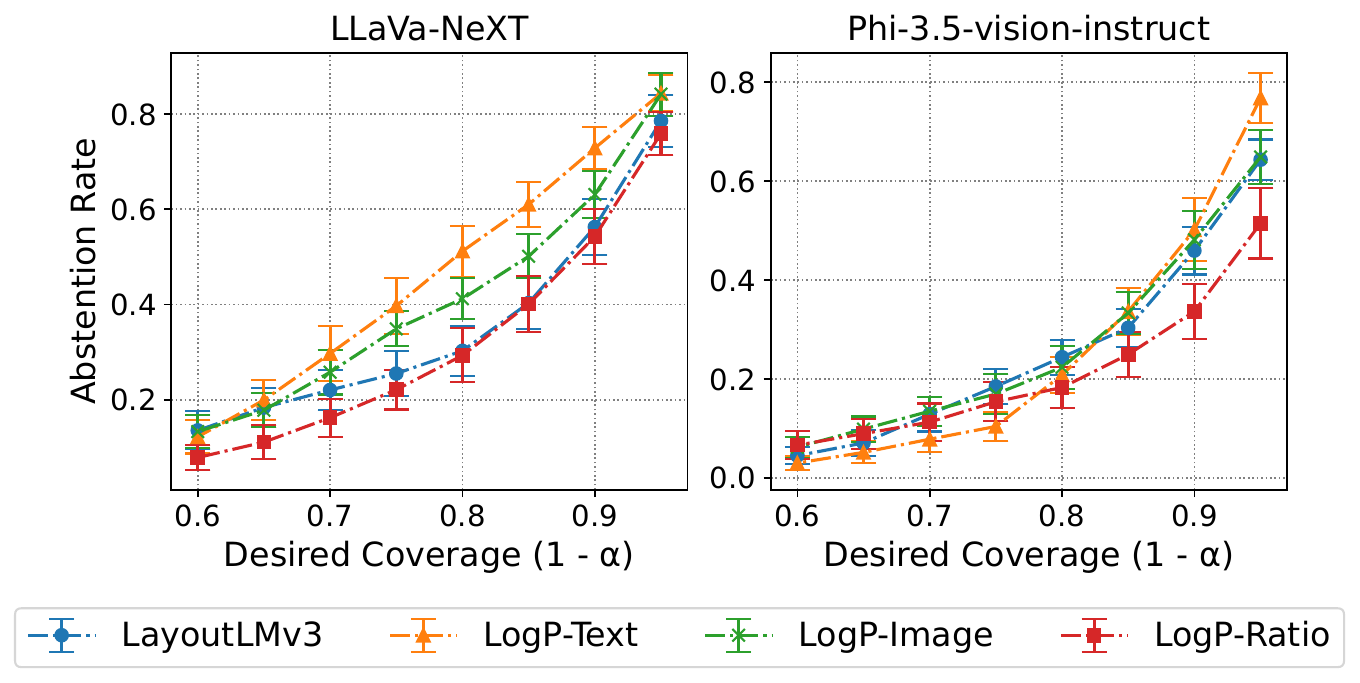}
        \caption{$\lambda=2$}
    \end{subfigure}

    \caption{Abstention rate with varying coverage using different scoring functions on the document understanding task with different error tolerances ($\lambda$).}
    \label{fig:sroie_abstent_lambda}
\end{figure}

\subsection{Examples}
We include additional examples of applying \methodName~to control error in responses from LVLMs for scene understanding (Fig.~\ref{fig:pope_example}), medical report generation (Fig.~\ref{fig:mimic_example}), and document understanding (Fig.~\ref{fig:sroie_example}).

\begin{figure*}[t]
    \centering
    \includegraphics[width=\linewidth]{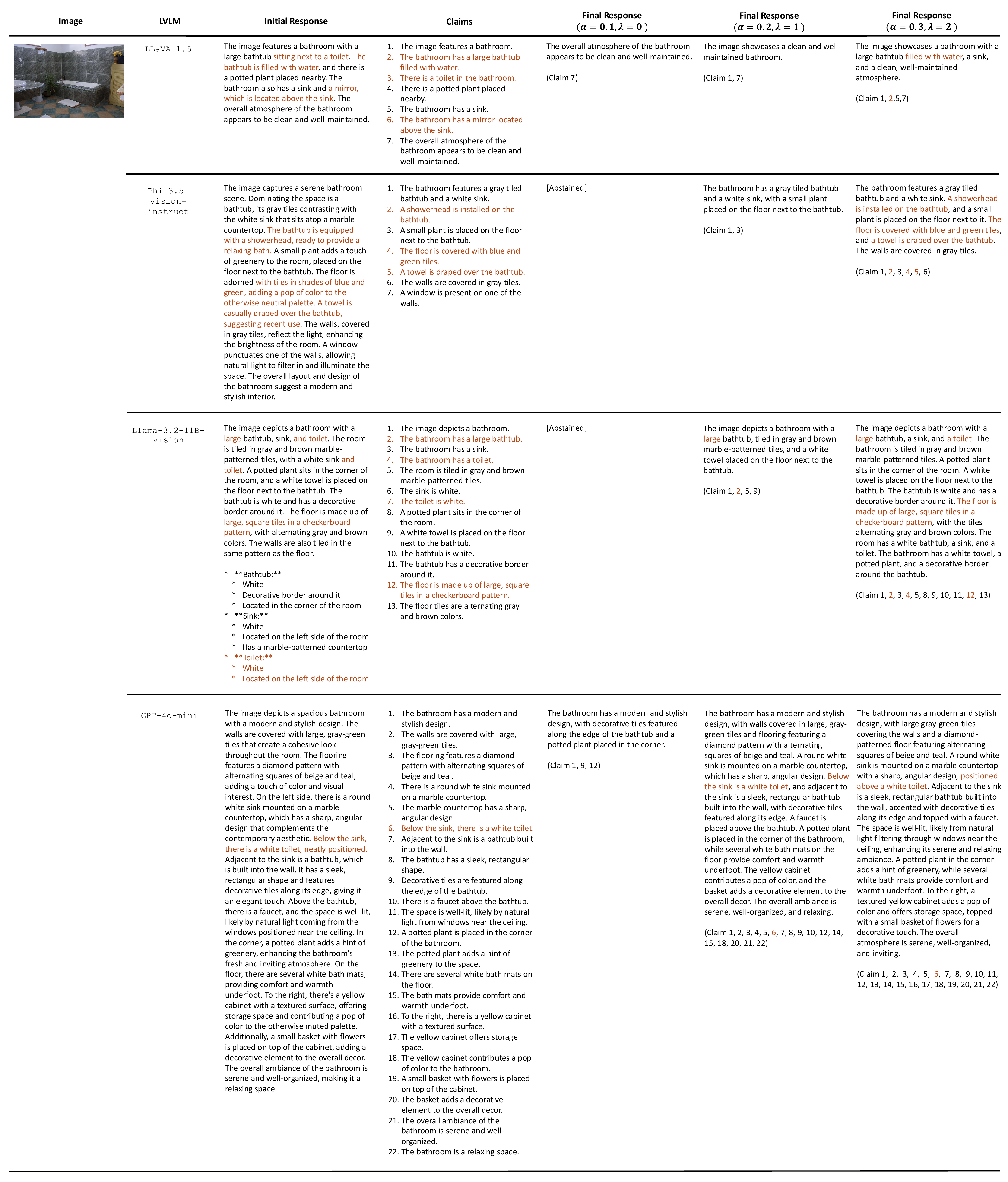}
    \caption{Examples of responses on the scene understanding task.}
    \label{fig:pope_example}
\end{figure*}

\begin{figure*}[t]
    \centering
    \includegraphics[width=\linewidth]{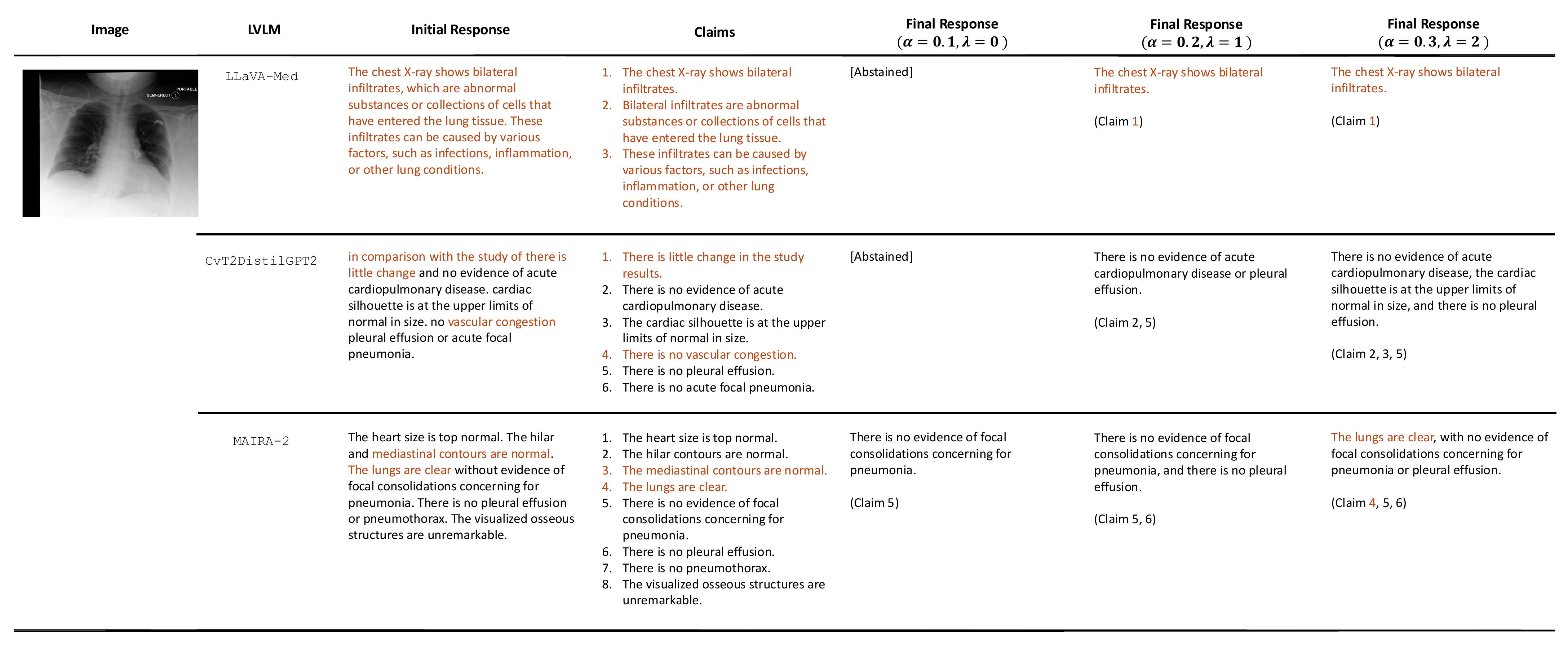}
    \caption{Examples of responses on the medical report generation task.}
    \label{fig:mimic_example}
\end{figure*}

\begin{figure*}[t]
    \centering
    \includegraphics[width=\linewidth]{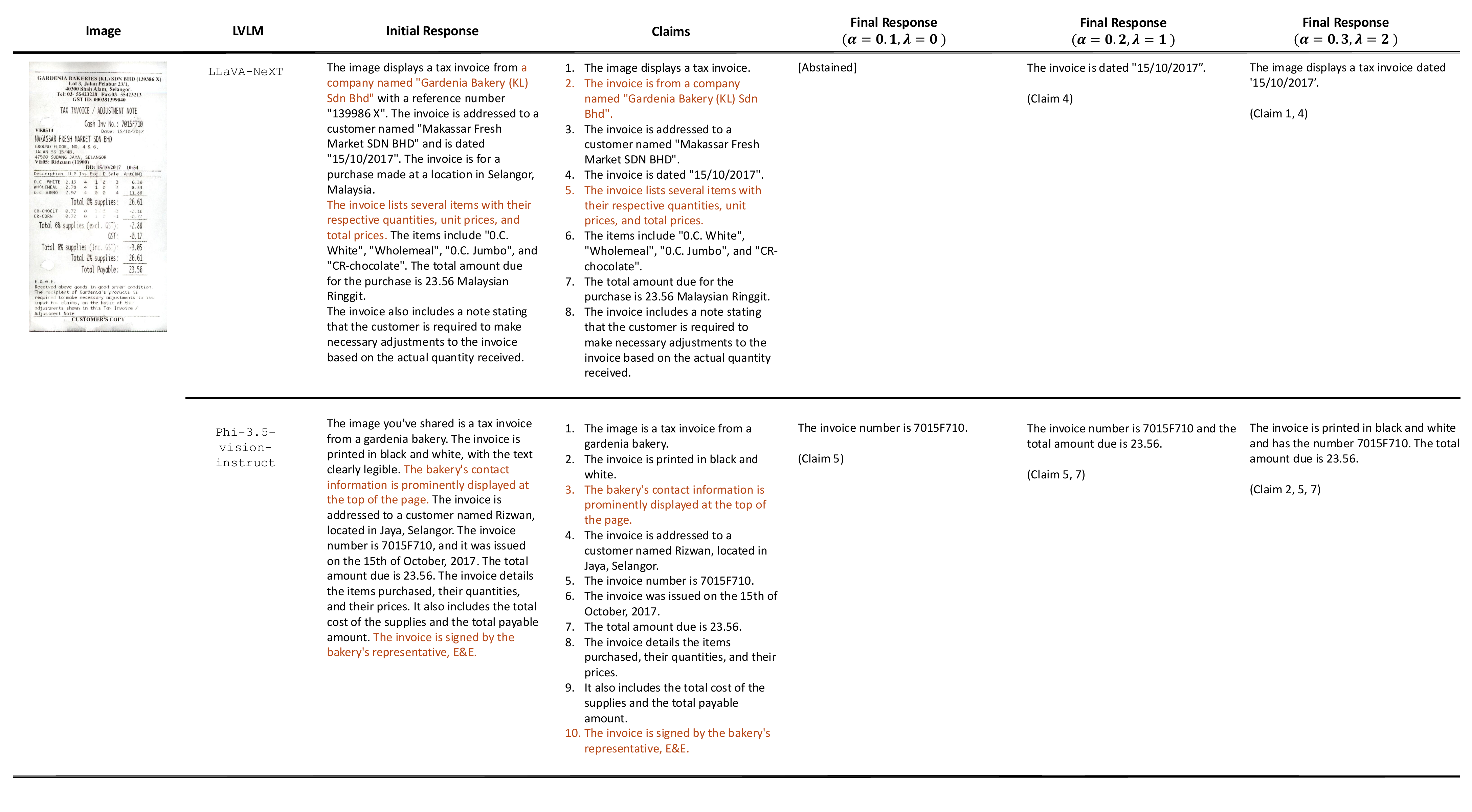}
    \caption{Examples of responses on the document understanding task.}
    \label{fig:sroie_example}
\end{figure*}

\section{Additional Discussions}

Our LVLM factuality framework follows the standard conformal prediction setting by assuming data exchangeability (a weaker notion than IID). In cases where this may not hold, there are alternative strategies that one could invoke, such as periodically updating the calibration set or applying a discount factor to older samples, as discussed in the theoretical analysis by Barber \textit{et al.}~\cite{barber2023conformal}.
For significant shifts, such as out-of-domain data, incorporating an additional OOD detection layer~\cite{li2024you} can help to empirically preserve coverage.

\end{document}